\documentclass[twoside]{article}

\usepackage{amssymb}
\usepackage{algorithm}
\usepackage{algorithmic}
\usepackage{booktabs}
\usepackage{xcolor} 
\usepackage{ulem}
\usepackage{graphicx} 
\usepackage{amsthm}
\usepackage{url}
\newtheorem{proposition}{Proposition}
\newtheorem{corollary}{Corollary}

\usepackage[preprint]{aistats2027}

\usepackage[round]{natbib}

\begin{document}

\twocolumn[

\aistatstitle{%
Frequency-Decoupled Diffusion Guidance for Non-Blind Image Deblurring}

\aistatsauthor{ Sihan Wang$^{1,\dagger}$ \And Jinshu Huang$^{1,\dagger,*}$ \And  Haibin Su$^{2}$ \And Yunhua Xue$^{1}$}

\aistatsaddress{$^{1}$School of Mathematical Sciences, Nankai University, Tianjin, China. \\
$^{2}$Yau Mathematical Sciences Center, Tsinghua University, Beijing, China.   \\

} 
]
\begingroup
\renewcommand{\thefootnote}{\ensuremath{\dagger}}
\footnotetext{Equal contribution (wangsihan@mail.nankai.edu.cn; huangjsh@mail.nankai.edu.cn).}
\endgroup
\begingroup
\renewcommand{\thefootnote}{*}
\footnotetext{Corresponding author.}
\endgroup

\begin{abstract}
 Pretrained diffusion models provide powerful image priors for training-free posterior sampling in image restoration. To guide this sampling process, frequency-aware methods progressively incorporate measurement information across frequency bands, facilitating coarse-to-fine reconstruction. 
However, existing methods typically do not explicitly separate frequency activation from degradation-induced attenuation, leaving attenuation differences among inactive frequencies insufficiently modeled.
In this work, we propose frequency-decoupled posterior guidance to separate frequency activation from attenuation-aware spectral regularization. 
Specifically, a progressive low-to-high frequency schedule determines the active measurement band, while a kernel-derived attenuation map defines a selective spectral prior over inactive components. 
To stabilize the sampling process, we also introduce a local trajectory regularizer that suppresses spatially irregular state-to-clean deviations.
For a fixed endpoint energy, we provide a KL-regularized path-space interpretation. In practice, we construct time-dependent guidance through local energy corrections using a Tweedie plug-in approximation. Experiments on natural-image benchmarks demonstrate strong PSNR and SSIM performance across challenging non-blind deblurring settings, even at higher measurement noise levels. 
\end{abstract}

\section{Introduction}
\label{sec:introduction}
Image restoration aims to recover a clean image from incomplete or
corrupted measurements. A broad class of restoration tasks takes the form
\begin{equation}
    \mathrm{y}=\mathrm{A}\mathrm{x}_0+\mathrm{n},
    \qquad
    \mathrm{n}\sim\mathcal{N}
    \bigl(\mathrm{0},\sigma_{\mathrm{y}}^2\mathrm{I}_m\bigr),
    \label{eq:model}
\end{equation}
where $\mathrm{x}_0\in\mathbb{R}^{d}$ is the unknown image,
$\mathrm{y}\in\mathbb{R}^{m}$ is the observation, and
$\mathrm{A}:\mathbb{R}^{d}\to\mathbb{R}^{m}$ is a known linear operator.
Recovering $\mathbf{x}_0$ is generally ill-posed because operator $\mathrm{A}$ may attenuate or discard part of the image information. Effective restoration must therefore balance measurement fidelity with image priors.

Classical methods design the prior through handcrafted regularizers, while more recent approaches use learned denoisers or energy models~\citep{RUDIN1992259,cai2012image,glowinski2017splitting,su2026, roth2009fields,venkatakrishnan2013plug,dong2015image}. Diffusion models provide an expressive alternative by learning the score fields of progressively perturbed data distributions~\citep{ho2020ddpm,song2021scorebased,huang2026score}. Conditional diffusion models can learn a direct degraded-to-clean mapping, but typically require paired data and task-specific training~\citep{saharia2022palette,whang2022deblurring}. Zero-shot methods instead combine a pretrained unconditional diffusion model with the known forward operator at inference time~\citep{kawar2022ddrm,song2023pseudoinverse,wang2023ddnm,chung2023dps,zhu2023diffpir}. This decoupling of the image prior from the observation model enables one pretrained model to address multiple inverse problems without retraining.

Despite this flexibility, existing zero-shot methods commonly enforce measurement consistency through spatial residuals, pseudoinverse projections, or likelihood gradients~\citep{kawar2022ddrm,wang2023ddnm,chung2023dps,meng2025dmps}. For natural images, diffusion denoising often recovers coarse structure before fine detail~\citep{falck2025fourier}. This behavior has motivated frequency-aware guidance, which progressively incorporates measurement information across frequency bands. FGPS follows this principle by gradually exposing measurement frequencies~\citep{thaker2025fgps}; noise-conditioned posterior continuation further couples frequency activation to the diffusion noise level~\citep{tian2026stabilizing}. However, existing frequency-aware methods typically do not explicitly separate frequency activation from degradation-induced attenuation, leaving attenuation differences among inactive frequencies insufficiently modeled. This distinction matters for directional or frequency-selective blur, where frequencies at similar spatial scales can have substantially different responses.

Motivated by these observations, we propose frequency-decoupled posterior guidance for non-blind image deblurring. By decoupling frequency activation from attenuation-aware regularization, our approach enables more precise incorporation of measurement information, effectively preventing highly attenuated or noisy frequencies from corrupting the update direction. Meanwhile, it better models inactive components by preventing the diffusion prior from introducing hallucination artifacts in spectral regions where measurement data is absent or severely degraded. 
Additionally, we introduce a local trajectory-stabilization regularizer that penalizes spatially irregular state-to-clean deviations, thereby promoting stable and spatially coherent sampling.
Our main contributions are summarized as follows:
\begin{itemize}
\item We introduce frequency-decoupled diffusion guidance for non-blind image deblurring. It is realized via a progressive schedule that determines which measurement frequencies are active at each reverse step, along with a kernel-derived attenuation map that defines a selective spectral prior over inactive components.

\item 
 We formulate these components in a time-dependent guidance energy and derive practical stepwise corrections using a Tweedie plug-in approximation. To stabilize the updates, we introduce a local trajectory regularizer that suppresses spatially irregular state-to-clean deviations. For a fixed endpoint energy, we also provide a KL/Gibbs path-space interpretation.

\item We evaluate the proposed framework on natural-image benchmarks across challenging non-blind deblurring settings. The method achieves strong PSNR and SSIM performance and remains robust as measurement noise increases. Controlled ablations clarify the roles of the individual components and identify trajectory stabilization as a key contributor to the observed improvements.
\end{itemize}

\section{Background}
\label{sec:background}

\subsection{Notation}
\label{sec:notation}

We use $\odot$ for element-wise multiplication, $\mathrm{I}_d$ for the
$d$-dimensional identity matrix, and $\mathbf{1}_{\Lambda}$ for the
indicator function of a set $\Lambda$. The discrete Fourier transform
(DFT) is denoted by $\mathcal{F}$, with $\omega$ indexing spatial
frequencies. We write $\mathrm{X}_t$ for a random state and
$\mathrm{x}_t$ for its realization. For $\mathrm{z}\in\mathbb{C}^{N}$,
define
$\|\mathrm{z}\|_{q,\mathrm{avg}}
:=(N^{-1}\sum_{j=1}^{N}|\mathrm{z}_j|^q)^{1/q}$
for $q\in\{1,2\}$.

\subsection{Diffusion Models}
\label{sec:diffusion_background}

Diffusion models progressively perturb clean data with Gaussian noise
and learn the corresponding reverse dynamics
~\citep{ho2020ddpm,song2021scorebased}. For scalar functions $f$ and $g$,
a continuous-time forward process can be written as
\begin{equation}
    \mathrm{d}\mathrm{X}_t
    =
    f(t)\mathrm{X}_t\,\mathrm{d}t
    +
    g(t)\,\mathrm{d}\mathrm{B}_t,
    \qquad
    \mathrm{X}_0\sim p_{\mathrm{data}},
    \label{eq:forward_sde}
\end{equation}
where $\mathrm{B}_t$ is a standard $d$-dimensional Wiener process.
Let $p_t$ denote the marginal density of $\mathrm{X}_t$. With time
running backward from $T$ to $0$, the reverse-time SDE is
\begin{equation}
    \mathrm{d}\mathrm{X}_t
    =
    \left[
        f(t)\mathrm{X}_t
        -
        g(t)^2\nabla\log p_t(\mathrm{X}_t)
    \right]\mathrm{d}t
    +
    g(t)\,\mathrm{d}\bar{\mathrm{B}}_t,
    \label{eq:reverse_sde}
\end{equation}
where $\bar{\mathrm{B}}_t$ is a reverse-time Wiener process and
$\nabla\log p_t$ denotes the marginal score. Replacing this score by a
learned model $\mathrm{s}_{\Theta}(t,\mathrm{x}_t)$ and discretizing the
reverse dynamics yields a practical sampler.

For the variance-preserving perturbation used in DDPMs
~\citep{ho2020ddpm}, let
$\bar{\alpha}_t=\prod_{j=1}^{t}(1-\beta_j)$, where
$\beta_j\in(0,1)$ is the discrete noise schedule. The forward
perturbation at step $t$ is
\begin{equation}
    \mathrm{X}_t
    =
    \sqrt{\bar{\alpha}_t}\,\mathrm{X}_0
    +
    \sqrt{1-\bar{\alpha}_t}\,
    \mathrm{\epsilon},
    \qquad
    \mathrm{\epsilon}
    \sim\mathcal{N}(\mathrm{0},\mathrm{I}_d),
    \label{eq:ddpm_forward}
\end{equation}
with $\mathrm{\epsilon}$ independent of $\mathrm{X}_0$.
Substituting the learned score into Tweedie's formula~\citep{efron2011tweedie} gives
\begin{equation}
    \widehat{\mathrm{x}}_{0,t}
    =
    \frac{
        \mathrm{x}_t
        +
        (1-\bar{\alpha}_t)
        \mathrm{s}_{\Theta}(t,\mathrm{x}_t)
    }{
        \sqrt{\bar{\alpha}_t}
    },
    \label{eq:tweedie_background}
\end{equation}
which equals
$\mathbb{E}[\mathrm{X}_0\mid\mathrm{X}_t=\mathrm{x}_t]$
when the learned score equals the true marginal score.

\subsection{Diffusion Posterior Guidance for Inverse Problems}
\label{sec:inverse_diffusion_background}

Conditioned on the observation $\mathrm{y}$, the conditional score can be decomposed as
\begin{equation}
\begin{aligned}
    \nabla_{\mathrm{x}_t}
    \log p_t(\mathrm{x}_t\mid\mathrm{y})
    &=
    \nabla_{\mathrm{x}_t}
    \log p_t(\mathrm{x}_t)
    \\
    &\quad+
    \nabla_{\mathrm{x}_t}
    \log p_t(\mathrm{y}\mid\mathrm{x}_t).
\end{aligned}
\label{eq:conditional_score_decomposition}
\end{equation}
The first term is approximated by the pretrained score model.
Because the observation depends on $\mathrm{X}_t$ through
$\mathrm{X}_0$, the second term involves
\begin{equation}
    p_t(\mathrm{y}\mid\mathrm{x}_t)
    =
    \int
    p(\mathrm{y}\mid\mathrm{x}_0)
    p(\mathrm{x}_0\mid\mathrm{x}_t)
    \,\mathrm{d}\mathrm{x}_0,
    \label{eq:marginal_likelihood}
\end{equation}
which is generally intractable. Diffusion Posterior Sampling (DPS)~\citep{chung2023dps} uses the plug-in approximation
$p_t(\mathrm{y}\mid\mathrm{x}_t)
\approx p(\mathrm{y}\mid\widehat{\mathrm{x}}_{0,t})$.
For the Gaussian observation model in \eqref{eq:model}, its
corresponding guidance surrogate is
\begin{equation}
    \nabla_{\mathrm{x}_t}
    \log p_t(\mathrm{y}\mid\mathrm{x}_t)
    \approx
    -
    \frac{1}{2\sigma_{\mathrm{y}}^2}
    \nabla_{\mathrm{x}_t}
    \left\|
        \mathrm{A}\widehat{\mathrm{x}}_{0,t}
        -
        \mathrm{y}
    \right\|_2^2.
    \label{eq:dps_guidance}
\end{equation}
Thus, measurement guidance can be computed by differentiating through
the clean-image estimate while keeping the pretrained score model
fixed. Our method applies this plug-in principle to a time-dependent,
frequency-decoupled energy.

\section{Related Work}

\subsection{Training-Based Diffusion Models for Image Restoration}

Conditional diffusion models learn restoration distributions from paired degraded and clean images. SR3~\citep{saharia2023sr3} conditions the reverse process on a low-resolution observation for image super-resolution, while Palette~\citep{saharia2022palette} extends conditional diffusion to a broader family of image-to-image tasks. I$^2$SB~\citep{liu2023i2sb}, IR-SDE~\citep{luo2023irsde}, and ResShift~\citep{yue2023resshift} further incorporate the degradation process into the diffusion bridge or stochastic restoration dynamics. Their learned models or conditioning mechanisms are typically tied to the degradation distributions used in training.

\subsection{Training-Free Diffusion Methods for Inverse Problems}
A common training-free strategy keeps the pretrained diffusion model fixed and uses the known forward operator to guide sampling.

\textbf{Operator information and measurement guidance.}
DDRM~\citep{kawar2022ddrm} exploits the singular-vector structure of linear operators, while DDNM~\citep{wang2023ddnm} uses a range--null-space decomposition. DPS~\citep{chung2023dps} evaluates a likelihood surrogate at a denoised clean-image estimate; DMPS~\citep{meng2025dmps} instead derives a closed-form likelihood-score approximation that avoids back-propagation through the denoiser.

\textbf{Frequency exposure and stabilization.}
A related line of work controls which frequency components are exposed to measurement consistency during sampling. 
FGPS~\citep{thaker2025fgps} progressively exposes measurement frequencies during
sampling.
Noise-conditioned posterior continuation~\citep{tian2026stabilizing} combines frequency exposure
with a Haar-domain rule for committing coarse and detail corrections.
MASF~\citep{qian2024masf} stabilizes generation by averaging frequency components of
clean-image predictions across sampling steps.
Our method uses kernel-derived attenuation to weight a spectral prior over inactive frequencies and regularizes state-to-clean deviations locally.

\textbf{Variational and reconstruction-oriented methods.}
Diffusion priors can also be incorporated into iterative reconstruction and explicit optimization objectives.
DiffPIR~\citep{zhu2023diffpir} combines diffusion denoising with plug-and-play restoration, while RED-Diff~\citep{mardani2023variational} uses variational approximation and denoising regularization. DMAP~\citep{xu2025rethinking} reinterprets DPS through posterior maximization and combines multiple likelihood-ascent steps with projection onto the reverse-transition Gaussian shell to improve reconstruction performance.

\section{Method}
\label{sec:method}
We propose a frequency-decoupled guidance framework for non-blind image deblurring. We first introduce the spectral guidance energy, then give its variational interpretation and describe the practical sampling rule.

\subsection{Frequency-Decoupled Guidance Energy}
\label{subsec:frequency_decoupled_energy}

To make the spectral construction explicit, consider non-blind deblurring with a known blur kernel $\kappa$ and observation operator $\mathrm{A}:\mathbb{R}^{d}\to\mathbb{R}^{d}$.
For a convolutional realization of the degradation,
\begin{equation}
	\mathrm{A}\mathrm{x}
	=
	\kappa * \mathrm{x},
   \label{eq:spatially_invariant_blur}
\end{equation}
where $*$ denotes discrete convolution under a specified boundary convention.

When this boundary convention diagonalizes the convolution in the discrete Fourier transform (DFT), the observation model becomes
\begin{equation}
	\bigl[\mathcal{F}(\mathrm{y})\bigr](\omega)
	=
	\widehat{\kappa}(\omega)
	\bigl[\mathcal{F}(\mathrm{x}_0)\bigr](\omega)
	+
	\bigl[\mathcal{F}(\mathrm{n})\bigr](\omega),
    \nonumber
\end{equation}
where $\widehat{\kappa}(\omega)$ is the DFT of $\kappa$. For boundary conventions that do not yield exact DFT diagonalization, we use $\widehat{\kappa}$ only to construct the kernel-based attenuation weight below.

\paragraph{Time-dependent frequency scheduling.}
We define the kernel-dependent spectral attenuation weight
\begin{equation}
    W_{\kappa}(\omega)
    =
    \frac{
        \rho
    }{
        |\widehat{\kappa}(\omega)|^2+\rho
    },
    \qquad \rho>0.
    \label{eq:kernel_attenuation_weight}
\end{equation}
The weight $W_{\kappa}(\omega)$ measures attenuation induced by the blur
kernel, which is small when the kernel retains a strong response and approaches one when the corresponding frequency is strongly attenuated.

Diffusion sampling typically establishes coarse structures before fine details. We therefore use a progressive low-to-high schedule for measurement consistency. At reverse step $t$, define
\begin{equation}
	\mathcal{B}_t
	=
	\left\{
	\omega:
	\|\omega\|_2<\tau_t
	\right\},
	\label{eq:active_frequency_set}
\end{equation}
and the corresponding binary mask 
\begin{equation}
	G_t(\omega)
	=
	\mathbf{1}_{\mathcal{B}_t}(\omega).
	\label{eq:lowpass_mask}
\end{equation}
The cutoffs satisfy
\begin{equation}
    \tau_T
    \leq
    \tau_{T-1}
    \leq
    \cdots
    \leq
    \tau_1,
\end{equation}
so the support of $G_t$ expands as sampling approaches the clean image. %
To identify strongly attenuated frequencies outside the active band, we further define
\begin{equation}
    M_t(\omega)
    =
    \bigl(1-G_t(\omega)\bigr)
    W_{\kappa}(\omega).
    \label{eq:spectral_prior_mask}
\end{equation}
{Thus, $M_t$ vanishes on the active band and assigns larger weights to inactive frequencies with stronger kernel attenuation.}

\paragraph{Frequency-decoupled guidance energy.}
Let
\begin{equation}
    R_{\mathrm{y}}(\mathrm{x})
    :=
    \mathcal{F}
    \bigl(
        \mathrm{A}\mathrm{x}-\mathrm{y}
    \bigr)
    \nonumber
\end{equation}
denote the Fourier-domain measurement residual. Using the frequency
weights above, we define
\begin{equation}
\begin{aligned}
    \Phi_{t,\mathrm{y}}(\mathrm{x})
    ={}&
    \lambda_{\mathrm{lp},t}
    \left\|
        G_t\odot R_{\mathrm{y}}(\mathrm{x})
    \right\|_{2,\mathrm{avg}}^2
    \\
    &+
    \lambda_{\mathrm{prior},t}
    \left\|
        M_t\odot\mathcal{F}(\mathrm{x})
    \right\|_{1,\mathrm{avg}},
\end{aligned}
\label{eq:frequency_decoupled_energy}
\end{equation}
where $\lambda_{\mathrm{lp},t}\geq0$ and
$\lambda_{\mathrm{prior},t}\geq0$. %
The first term imposes measurement consistency on the active band, whereas the second suppresses strongly attenuated components that have not yet entered this band. As $G_t$ expands, the corresponding frequencies leave $M_t$ and are handled by measurement consistency. During sampling, we evaluate this energy at the Tweedie estimate $\widehat{\mathrm{x}}_{0,t}$.

\subsection{Variational Interpretation of Energy Guidance}
\label{sec:posterior_regularization}
We begin with a fixed endpoint energy, for which the induced score correction has an exact variational form.
Let $P_{\Theta}$ denote the joint path distribution of the pretrained reverse trajectory
$\mathrm{X}_{0:T}=(\mathrm{X}_0,\ldots,\mathrm{X}_T)$, starting from $\mathrm{X}_T\sim\mathcal{N}(\mathrm{0},\mathrm{I}_d)$. For a fixed observation $\mathrm{y}$, let
\begin{equation}
    \Phi_{\mathrm{y}}:\mathbb{R}^d\rightarrow[0,\infty)
\end{equation}
be a measurable endpoint energy \footnote{Here, ``fixed'' means that $\Phi_{\mathrm{y}}$ has no explicit dependence on the diffusion step $t$, although it may depend on both $\mathrm{x}_0$ and the fixed observation $\mathrm{y}$}. Define
\begin{equation}
    Z_{\mathrm{y}}
    :=
    \mathbb{E}_{P_{\Theta}}
    \left[
        \exp\bigl(-\Phi_{\mathrm{y}}(\mathrm{X}_0)\bigr)
    \right].
\end{equation}
Then $0<Z_{\mathrm{y}}\leq 1$.

We consider the minimization over trajectory distributions absolutely continuous with respect to $P_{\Theta}$:
\begin{equation}
    P_{\mathrm{y}}^{*}
    =
    \underset{P\ll P_{\Theta}}{\arg\min}
    \left\{
        \operatorname{KL}\bigl(P\,\|\,P_{\Theta}\bigr)
        +
        \mathbb{E}_{P}
        \left[
            \Phi_{\mathrm{y}}(\mathrm{X}_0)
        \right]
    \right\}.
    \label{eq:path_regularization_short}
\end{equation}
By the Gibbs variational principle, the unique minimizer is
\begin{equation}
    \frac{\mathrm{d}P_{\mathrm{y}}^{*}}
         {\mathrm{d}P_{\Theta}}
    (\mathrm{x}_{0:T})
    =
    \frac{
        \exp\bigl[-\Phi_{\mathrm{y}}(\mathrm{x}_0)\bigr]
    }{
        Z_{\mathrm{y}}
    }.
    \label{eq:gibbs_reweighting}
\end{equation}
Thus, the endpoint energy reweights trajectories of the pretrained reverse process; see Supplementary Sec.~\ref{app:proof_path_regularization} for the proof and \citet{uehara2025finetuning} for related entropy-regularized diffusion formulations.

For $t\in\{1,\ldots,T\}$, suppose that the marginal law of $\mathrm{X}_t$ and the conditional law of $\mathrm{X}_0$ given $\mathrm{X}_t=\mathrm{x}_t$ have densities with respect to the Lebesgue measure. Denote them by $p_{\Theta,t}(\mathrm{x}_t)$ and
$p_{\Theta}(\mathrm{x}_0\mid\mathrm{x}_t)$, respectively.
The marginal density of $\mathrm{X}_t$ under the energy-tilted path measure $P_{\mathrm{y}}^*$ is then
\begin{equation}
    \pi_t^{\Phi}(\mathrm{x}_t;\mathrm{y})
    =
    \frac{1}{Z_{\mathrm{y}}}
    p_{\Theta,t}(\mathrm{x}_t)
    Z_t(\mathrm{x}_t;\mathrm{y}),
    \label{eq:tilted_marginal}
\end{equation}
where
\begin{equation}
\begin{aligned}
    Z_t(\mathrm{x}_t;\mathrm{y})
    &:=
    \mathbb{E}_{P_{\Theta}}
    \left[
        \exp\bigl(-\Phi_{\mathrm{y}}(\mathrm{X}_0)\bigr)
        \,\middle|\,
        \mathrm{X}_t=\mathrm{x}_t
    \right]
    \\
    &=
    \int
    p_{\Theta}(\mathrm{x}_0\mid\mathrm{x}_t)
    \exp\bigl[-\Phi_{\mathrm{y}}(\mathrm{x}_0)\bigr]
    \,\mathrm{d}\mathrm{x}_0.
\end{aligned}
\end{equation}
At the endpoint, {assuming that the marginal law of $\mathrm{X}_0$ has density $p_{\Theta,0}$}, Eq.~\eqref{eq:gibbs_reweighting} directly gives
\begin{equation}
    \pi_0^{\Phi}(\mathrm{x}_0;\mathrm{y})
    =
    \frac{1}{Z_{\mathrm{y}}}
    p_{\Theta,0}(\mathrm{x}_0)
    \exp
    \left[
        -\Phi_{\mathrm{y}}(\mathrm{x}_0)
    \right].
    \label{eq:tilted_endpoint}
\end{equation}
{If $p_{\Theta,t}(\mathrm{x}_t)$ and $Z_t(\mathrm{x}_t;\mathrm{y})$ are positive and differentiable in $\mathrm{x}_t$, taking the score gives}
\begin{equation}
\begin{aligned}
    \nabla_{\mathrm{x}_t}
    \log\pi_t^{\Phi}(\mathrm{x}_t;\mathrm{y})
    ={}&
    \nabla_{\mathrm{x}_t}
    \log p_{\Theta,t}(\mathrm{x}_t)
    \\
    &+
    \nabla_{\mathrm{x}_t}
    \log Z_t(\mathrm{x}_t;\mathrm{y}).
\end{aligned}
\label{eq:score_decomposition_short}
\end{equation}
{The first term is the score of the reference marginal induced by the pretrained reverse process}, while the second is the correction induced by the endpoint energy.

For the Gaussian observation model in Eq.~\eqref{eq:model},
\begin{equation}
    -\log p(\mathrm{y}\mid\mathrm{x}_0)
    =
    \frac{1}{2\sigma_{\mathrm{y}}^2}
    \left\|
        \mathrm{A}\mathrm{x}_0-\mathrm{y}
    \right\|_2^2
    +
    C_{\mathrm{y}},
    \label{eq:gaussian_negative_log_likelihood}
\end{equation}
where $C_{\mathrm{y}}$ is independent of $\mathrm{x}_0$. Hence, if $\Phi_{\mathrm{y}}$ equals the negative log-likelihood up to
an additive constant independent of $\mathrm{x}_0$, then
\begin{equation}
    \pi_0^{\Phi}(\mathrm{x}_0;\mathrm{y})
    \propto
    p_{\Theta,0}(\mathrm{x}_0)
    p(\mathrm{y}\mid\mathrm{x}_0),
\end{equation}
which is the Bayesian posterior under the pretrained diffusion prior. When the energy contains the frequency-dependent prior or other terms
not implied by the physical likelihood, the resulting distribution is instead an energy-tilted, or generalized, posterior.

{In our method, $\Phi_{t,\mathrm{y}}$ varies with $t$, so the fixed-energy Gibbs construction above does not generally define a single path measure shared across all reverse steps. We therefore apply this construction locally: at each step $t$, we treat $\Phi_{t,\mathrm{y}}$ as fixed and approximate the resulting score correction as described in the next subsection.}

\subsection{Practical Energy Guidance and Stable Sampling}
\label{sec:practical_sampling}
We now derive the practical objective used to correct each reverse step and introduce a spatial regularizer for trajectory stabilization.

For the current step-dependent energy $\Phi_{t,\mathrm{y}}$, the local
fixed-energy construction above involves the conditional partition
function
\begin{equation}
    Z_t^{\mathrm{loc}}(\mathrm{x}_t;\mathrm{y})
    =
    \int
    p_{\Theta}(\mathrm{x}_0\mid\mathrm{x}_t)
    \exp\left[-\Phi_{t,\mathrm{y}}(\mathrm{x}_0)\right]
    \,\mathrm{d}\mathrm{x}_0.
    \label{eq:local_partition}
\end{equation}
Because the exact conditional density
$p_{\Theta}(\mathrm{x}_0\mid\mathrm{x}_t)$ is not explicitly available,
we approximate this quantity using the Tweedie estimate
$\widehat{\mathrm{x}}_{0,t}$ defined in
Eq.~\eqref{eq:tweedie_background}:
\begin{equation}
    Z_t^{\mathrm{loc}}(\mathrm{x}_t;\mathrm{y})
    \approx
    \exp\left[
        -\Phi_{t,\mathrm{y}}
        (\widehat{\mathrm{x}}_{0,t})
    \right].
    \label{eq:mean_substitution}
\end{equation}

Using the measurement residual $R_{\mathrm{y}}$ defined above, the
measurement term under this substitution is
\begin{equation}
    \mathcal{L}_{\mathrm{lp},t}
    =
    \left\|
        G_t\odot
        R_{\mathrm{y}}(\widehat{\mathrm{x}}_{0,t})
    \right\|_{2,\mathrm{avg}}^2.
    \label{eq:practical_measurement_term}
\end{equation}
Similarly, the spectral prior in
Eq.~\eqref{eq:frequency_decoupled_energy} becomes
\begin{equation}
    \mathcal{P}_t
    =
    \left\|
        M_t\odot
        \mathcal{F}(\widehat{\mathrm{x}}_{0,t})
    \right\|_{1,\mathrm{avg}}.
    \label{eq:clean_spectral_prior}
\end{equation}
The resulting practical frequency objective is
\begin{equation}
    \mathcal{L}_t^{\mathrm{freq}}
    :=
    \Phi_{t,\mathrm{y}}
    (\widehat{\mathrm{x}}_{0,t})
    =
    \lambda_{\mathrm{lp},t}
    \mathcal{L}_{\mathrm{lp},t}
    +
    \lambda_{\mathrm{prior},t}
    \mathcal{P}_t.
    \label{eq:practical_frequency_objective}
\end{equation}

\paragraph{Local trajectory stabilization and practical objective.}
The frequency objective does not directly control the spatial regularity of the state-to-clean deviation. We further introduce
\begin{equation}
    \mathcal{R}_t
    =
    \operatorname{TV}_{\mathrm{ani}}
    \left(
        \mathrm{x}_t
        -
        \operatorname{sg}
        (\widehat{\mathrm{x}}_{0,t})
    \right),
    \label{eq:trajectory_regularization}
\end{equation}
where $\operatorname{sg}(\cdot)$ denotes stop-gradient. This term discourages rapid spatial variation in the state-to-clean deviation without directly regularizing the reconstructed image or changing the spectral masks. Hence, the complete practical objective is
\begin{equation}
    \mathcal{L}_t
    =
    \mathcal{L}_t^{\mathrm{freq}}
    +
    \lambda_{\mathrm{reg},t}
    \mathcal{R}_t,
    \qquad
    \lambda_{\mathrm{reg},t}\geq0.
    \label{eq:sampling_objective}
\end{equation}

We parameterize the time-dependent coefficients and cutoff by the
normalized reverse progress
\begin{equation}
    p_t
    :=
    \frac{T-t}{T-1},
    \qquad
    t=T,T-1,\ldots,1.
    \label{eq:reverse_progress}
\end{equation}
The coefficients
$\lambda_{j,t}$, $j\in\{\mathrm{lp},\mathrm{prior},\mathrm{reg}\}$,
use fixed sigmoid gates, while the cutoff follows a monotone exponential schedule. The explicit forms used in our experiments are specified in Supplementary Sec.~\ref{app:experimental_details}.

\paragraph{Normalized guidance update.}
We now describe how the practical objective is incorporated into each
reverse step. Let $\mathcal{S}_{\Theta,t}^{(a)}$ denote the reference reverse sampler, where $a\in\{\mathrm{DDPM},\mathrm{DDIM}\}$. At step
$t$, it produces the Tweedie estimate and a reference proposal:
\begin{equation}
    \left(
        \widehat{\mathrm{x}}_{0,t},
        \widetilde{\mathrm{x}}_{t-1}
    \right)
    =
    \mathcal{S}_{\Theta,t}^{(a)}
    \left(
        \mathrm{x}_t
    \right).
    \label{eq:reference_sampler}
\end{equation}

Let $ \mathrm{g}_t = \nabla_{\mathrm{x}_t} \mathcal{L}_t$ denote the guidance gradient. For each sample $b$, we normalize the gradient over all channels and spatial locations:
\begin{equation}
    \mathcal{N}_{\varepsilon}
    \left(
        \mathrm{g}_{t,b}
    \right)
    :=
    \frac{
        \mathrm{g}_{t,b}
    }{
        \left\|
            \mathrm{g}_{t,b}
        \right\|_2
        +
        \varepsilon
    },
    \qquad \varepsilon>0.
    \label{eq:per_sample_gradient_normalization}
\end{equation}
The reference proposal is then corrected as
\begin{equation}
    \mathrm{x}_{t-1}
    =
    \widetilde{\mathrm{x}}_{t-1}
    -
    s_t
    \mathcal{N}_{\varepsilon}
    \left(
        \mathrm{g}_t
    \right),
    \label{eq:complete_sampling_update}
\end{equation}
where $s_t\geq0$ is the guidance step size. We use a cosine schedule
that interpolates between $s_{\mathrm{early}}$ and
$s_{\mathrm{late}}$:
\begin{equation}
    s_t
    =
    s_{\mathrm{early}}
    +
    \frac{
        s_{\mathrm{late}}-s_{\mathrm{early}}
    }{2}
    \left[
        1-\cos(\pi p_t)
    \right].
    \label{eq:guidance_step_schedule}
\end{equation}
For each sample, the resulting correction satisfies
\begin{equation}
    \left\|
        \mathrm{x}_{t-1,b}
        -
        \widetilde{\mathrm{x}}_{t-1,b}
    \right\|_2
    =
    s_t
    \frac{
        \|\mathrm{g}_{t,b}\|_2
    }{
        \|\mathrm{g}_{t,b}\|_2+\varepsilon
    }
    \leq
    s_t.
    \label{eq:single_step_guidance_bound}
\end{equation}
Thus, the objective determines the correction direction, while $s_t$ provides direct control over its magnitude at each reverse step.

\paragraph{Operator-specific conjugate--gradient--Jacobian (CGJ) correction}
For truncated uniform deblurring, the implemented zero-boundary operator $\mathrm{A}$ is not diagonal in the DFT basis, whereas the spectral attenuation weight $W_{\kappa}$ is constructed from the Fourier response of the underlying blur kernel. We therefore combine measurement-space MAP refinement~\citep{li2025efficient} with Jacobian-guided propagation~\citep{song2023pseudoinverse}. The measurement-space component approximately solves
\begin{equation}
    \left[
        \bar{\alpha}_t
        \sigma_{\mathrm{y}}^2
        \mathrm{I}
        +
        \left(
            1-\bar{\alpha}_t
        \right)
        \mathrm{A}\mathrm{A}^{\top}
    \right]
    \mathrm{e}_t
    =
    \sqrt{\bar{\alpha}_t}
    \left(
        \mathrm{y}
        -
        \mathrm{A}
        \widehat{\mathrm{x}}_{0,t}
    \right)
    \label{eq:cgj_measurement_system}
\end{equation}
in the measurement space using conjugate
gradients. This avoids explicitly forming or inverting the system matrix. The solution is mapped to the clean-image space as
\begin{equation}
    \mathrm{u}_t^{\mathrm{CG}}
    =
    \mathcal{N}_{\varepsilon_{\mathrm{CG}}}
    \left(
        \mathrm{A}^{\top}\mathrm{e}_t
    \right),
    \label{eq:cgj_clean_direction}
\end{equation}
where $\varepsilon_{\mathrm{CG}}>0$ prevents division by zero.

To propagate this clean-image direction to the current diffusion state,
we follow a Jacobian-guided vector--Jacobian construction motivated by $\Pi$GDM~\citep{song2023pseudoinverse}. Using a separate computation graph, we compute the vector--Jacobian product
\begin{equation}
    \mathrm{g}_t^{\mathrm{CG}}
    =
    \nabla_{\mathrm{x}_t}
    \left[
        -
        \left\langle
            \widehat{\mathrm{x}}_{0,t}^{\mathrm{fresh}},
            \operatorname{sg}
            \left(
                \mathrm{u}_t^{\mathrm{CG}}
            \right)
        \right\rangle
    \right].
    \label{eq:cgj_vjp}
\end{equation}
This operation accounts for the dependence of the Tweedie estimate on
$\mathrm{x}_t$. The base and CGJ directions are combined as
\begin{equation}
    \overline{\mathrm{g}}_t^{\mathrm{CGJ}}
    =
    \mathcal{N}_{\varepsilon_{\mathrm{CG}}}
    \left[
        (1-\beta_{\mathrm{CG}})
        \mathcal{N}_{\varepsilon}
        \left(
            \mathrm{g}_t
        \right)
        +
        \beta_{\mathrm{CG}}
        \mathcal{N}_{\varepsilon_{\mathrm{CG}}}
        \left(
            \mathrm{g}_t^{\mathrm{CG}}
        \right)
    \right],
    \label{eq:cgj_gradient_mixture}
\end{equation}
where $\beta_{\mathrm{CG}}\in[0,1]$. When the CGJ correction is active,
$\overline{\mathrm{g}}_t^{\mathrm{CGJ}}$ replaces
$\mathcal{N}_{\varepsilon}(\mathrm{g}_t)$ in
Eq.~\eqref{eq:complete_sampling_update}. Since the combined direction is normalized again, the bound in
Eq.~\eqref{eq:single_step_guidance_bound} remains valid. The activation schedule and hyperparameters of this branch are provided
in Supplementary Sec.~\ref{app:experimental_details}.

\begin{algorithm}[t]
\caption{Frequency-decoupled guided sampling}
\label{alg:frequency_decoupled_sampling}
\small
\begin{algorithmic}[1]

\REQUIRE Observation $\mathrm{y}$, forward operator $\mathrm{A}$,
reference sampler $\mathcal{S}_{\Theta}^{(a)}$, kernel attenuation
weight $W_{\kappa}$, predefined schedules, and CGJ configuration when applicable

\STATE
$\mathrm{x}_T\sim\mathcal{N}(\mathrm{0},\mathrm{I}_d)$

\FOR{$t=T,T-1,\ldots,1$}

    \STATE Compute $p_t$, $\tau_t$, $s_t$, and
    $\{\lambda_{j,t}\}_{j\in
    \{\mathrm{lp},\mathrm{prior},\mathrm{reg}\}}$

    \STATE
    $(\widehat{\mathrm{x}}_{0,t},
      \widetilde{\mathrm{x}}_{t-1})
    \gets
    \mathcal{S}_{\Theta,t}^{(a)}(\mathrm{x}_t)$

    \STATE Construct $G_t$ and $M_t$ using
    Eqs.~\eqref{eq:lowpass_mask} and
    \eqref{eq:spectral_prior_mask}

    \STATE Evaluate $\mathcal{L}_t$ using
    Eq.~\eqref{eq:sampling_objective}

    \STATE
    $\mathrm{g}_t
    \gets
    \nabla_{\mathrm{x}_t}\mathcal{L}_t$

    \STATE
    $\overline{\mathrm{g}}_t
    \gets
    \mathcal{N}_{\varepsilon}(\mathrm{g}_t)$

    \IF{CGJ correction is active at step $t$}

        \STATE Compute
        $\overline{\mathrm{g}}_t^{\mathrm{CGJ}}$
        using
        Eqs.~\eqref{eq:cgj_measurement_system}--%
        \eqref{eq:cgj_gradient_mixture}

        \STATE
        $\overline{\mathrm{g}}_t
        \gets
        \overline{\mathrm{g}}_t^{\mathrm{CGJ}}$
    \ENDIF
    \STATE
    $\mathrm{x}_{t-1}
    \gets
    \widetilde{\mathrm{x}}_{t-1}
    -
    s_t\overline{\mathrm{g}}_t$
\ENDFOR
\ENSURE Reconstruction $\mathrm{x}_0$
\end{algorithmic}
\end{algorithm}

\paragraph{Two-branch output fusion.}
Output-space aggregation of multiple diffusion reconstructions has
previously been used to improve distortion-oriented restoration quality
in diffusion-based deblurring~\citep{whang2022deblurring,ren2023multiscale}.
Different from these sample-averaging schemes, our final reconstruction
combines two independently completed branch-specific trajectories using a task-fixed output-space weight.%

\begin{table*}[t]
\centering
\caption{
Main fidelity comparison on FFHQ and ImageNet under three deblurring
settings. Higher PSNR and SSIM are better. Bold and underlined values
indicate the best and second-best results, respectively.
}
\label{tab:quantitative_comparison}

\small
\renewcommand{\arraystretch}{1.08}
\def\mainresultstablewidth{0.8\textwidth}

\begin{tabular*}{\mainresultstablewidth}{@{\extracolsep{\fill}}lcccccc@{}}
\toprule
&
\multicolumn{2}{c}{Gaussian Blur} &
\multicolumn{2}{c}{Motion Blur} &
\multicolumn{2}{c}{Uniform Blur}
\\
\cmidrule(lr){2-3}
\cmidrule(lr){4-5}
\cmidrule(lr){6-7}

Method
& PSNR$\uparrow$ & SSIM$\uparrow$
& PSNR$\uparrow$ & SSIM$\uparrow$
& PSNR$\uparrow$ & SSIM$\uparrow$
\\
\midrule

\multicolumn{7}{c}{\textbf{FFHQ Dataset}}\\
\midrule

DPS~\citep{chung2023dps}
& 25.253 & 0.702
& 22.582 & 0.619
& 23.697 & 0.645
\\

DSG~\citep{yang2023dsg}
& 27.299 & 0.778
& 25.960 & 0.749
& 26.548 & 0.746
\\

FGPS~\citep{thaker2025fgps}
& 27.380 & 0.780
& 26.990 & 0.770
& 26.961 & 0.757
\\

DPS-MAP~\citep{li2025efficient}
& 29.274 & 0.836
& \underline{29.260} & 0.834
& 28.858 & 0.817
\\

DMPS~\citep{meng2025dmps}
& \underline{29.379} & \underline{0.839}
& 29.225 & \underline{0.835}
& \underline{28.929} & \underline{0.819}
\\

Ours
& \textbf{29.528} & \textbf{0.842}
& \textbf{29.447} & \textbf{0.839}
& \textbf{29.160} & \textbf{0.827}
\\

\midrule
\multicolumn{7}{c}{\textbf{ImageNet Dataset}}\\
\midrule

DPS~\citep{chung2023dps}
& 21.180 & 0.519
& 20.135 & 0.470
& 20.762 & 0.482
\\

DSG~\citep{yang2023dsg}
& 23.382 & 0.622
& 22.279 & 0.587
& 22.615 & 0.575
\\

FGPS~\citep{thaker2025fgps}
& 23.883 & 0.656
& 23.418 & 0.639
& 23.472 & 0.618
\\

DPS-MAP~\citep{li2025efficient}
& \underline{25.431} & \underline{0.722}
& \textbf{26.658} & \textbf{0.768}
& \textbf{25.885} & \underline{0.722}
\\

DMPS~\citep{meng2025dmps}
& 25.193 & 0.711
& 26.324 & 0.751
& 25.592 & 0.709
\\

Ours
& \textbf{25.500} & \textbf{0.725}
& \underline{26.415} & \underline{0.763}
& \underline{25.869} & \textbf{0.724}
\\

\bottomrule
\end{tabular*}
\vspace{1mm}
\begin{minipage}{\mainresultstablewidth}
\footnotesize

\end{minipage}

\end{table*}

\section{Experiments}

\subsection{Experimental Setup}
\label{sec:experimental_setup}

\paragraph{Datasets and pretrained models.}
We evaluate on Flickr-Faces-HQ (FFHQ)~\citep{karras2019style} and the ImageNet Large Scale Visual Recognition Challenge 2012 (ILSVRC2012) validation set~\citep{deng2009imagenet,russakovsky2015imagenet}, at
$256\times256$ resolution. For each dataset--task setting, we use a
fixed set of $1{,}000$ evaluation images. Within each task, all methods receive identical degraded measurements for the same images. We use a pretrained unconditional diffusion model for FFHQ and a pretrained class-conditional model with ground-truth labels for ImageNet. The dataset-specific backbone is fixed across methods, without task-specific fine-tuning. Image preprocessing is detailed in Supplementary Sec.~\ref{app:experimental_details}.
Our code will be available at \url{https://github.com/Sea-serpents/frequency-decoupled-diffusion-guidance}.

\paragraph{Tasks.}
We consider Gaussian, motion, and uniform non-blind deblurring, which
exhibit isotropic, directional, and frequency-selective attenuation,
respectively. Unless otherwise stated, measurements have additive
Gaussian noise with $\sigma_{\mathrm{y}}=0.05$. Uniform deblurring
uses a truncated forward operator. Kernel specifications, boundary
conventions, and operator construction are given in Supplementary
Sec.~\ref{app:experimental_details}.

\paragraph{Baselines and metrics.}
We compare with DPS~\citep{chung2023dps},
DSG~\citep{yang2023dsg}, FGPS~\citep{thaker2025fgps},
DPS-MAP~\citep{li2025efficient}, and DMPS~\citep{meng2025dmps}.
We report PSNR and SSIM on RGB images in $[0,1]$ as the primary
paired-fidelity metrics. LPIPS and FID results, together with their
evaluation details, are provided in Supplementary
Sec.~\ref{app:perceptual_metrics}.

\paragraph{Guidance and inference.}
We use $\rho=10^{-3}$ for the attenuation weight $W_{\kappa}$,
shared across RGB channels. Guidance coefficients follow sigmoid
schedules, and the active-band cutoff follows an exponential
schedule; their formulas and numerical settings are given in
Supplementary Sec.~\ref{app:experimental_details}.
Results labeled \emph{Ours} fuse the outputs of two completed
sampling branches: two base-guidance branches for Gaussian and motion
blur, and a base branch with a CGJ-augmented branch for uniform blur.
CGJ settings are shared across datasets; branch configurations and
fusion weights are specified in the supplementary material.

\subsection{\textbf{Results}}
Table~\ref{tab:quantitative_comparison} presents our main fidelity results
on FFHQ and ImageNet under Gaussian, motion, and uniform deblurring. Across
the six dataset--task combinations, our method achieves the best PSNR in
four cases and the best SSIM in five, while ranking second in all remaining
cases.

\begin{figure*}[t]
    \centering
    \includegraphics[width=\textwidth]
    {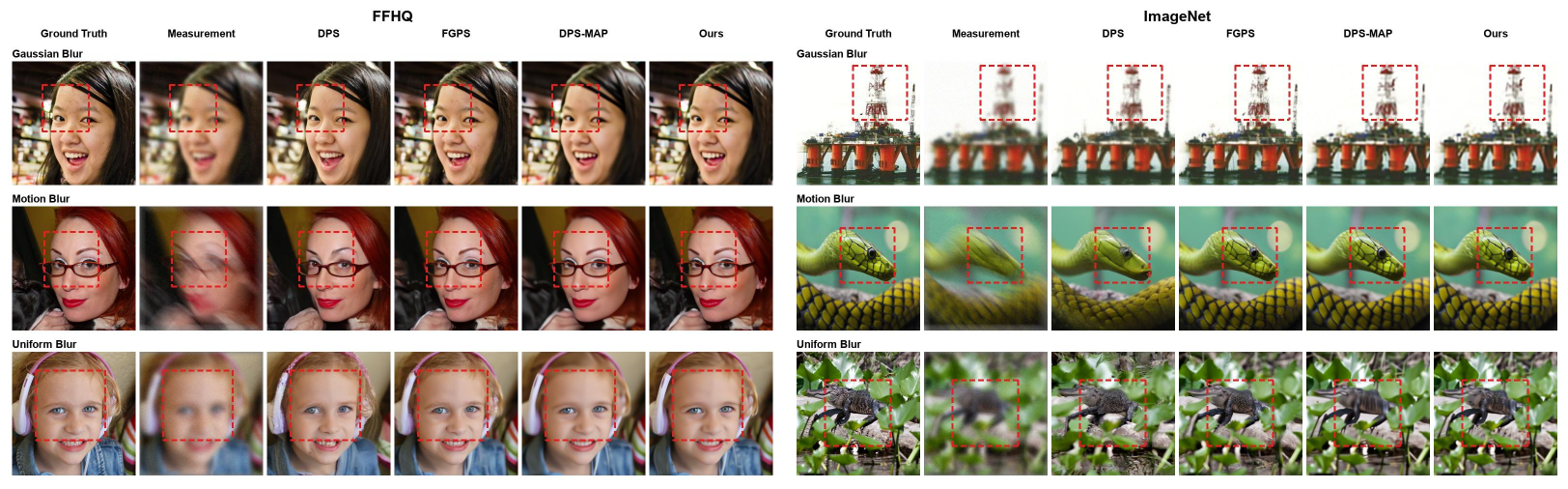}
    \caption{
    Qualitative comparison on FFHQ and ImageNet under Gaussian, motion,
    and uniform deblurring. Columns show the ground truth, measurement,
    DPS, FGPS, DPS-MAP, and our method. Red dashed boxes highlight
    matched local regions for visual comparison. Additional examples and enlarged crops are provided in the supplementary material.
    }
    \label{fig:qualitative_comparison}
\end{figure*}

\paragraph{Results on FFHQ}
Our method ranks first in both PSNR and SSIM for all three FFHQ tasks.
Relative to the strongest competing result in each setting, it improves PSNR by $0.149$, $0.187$, and $0.231$ dB for Gaussian, motion, and uniform deblurring, respectively. The improvement is therefore consistent across the three blur models and is largest under uniform deblurring, where the operator-specific CGJ branch is used. Compared with FGPS, which also uses a
progressive frequency curriculum, our method improves PSNR by more than $2$ dB and SSIM by at least $0.062$ in every FFHQ setting. These results
show that explicitly separating diffusion-time frequency activation from degradation-dependent spectral sensitivity substantially strengthens
measurement-faithful reconstruction.

\begin{table}[t]
\centering
\caption{
{Ablation of the main guidance components on the fixed 50-image FFHQ motion-deblurring subset using a single DDPM branch. }
}
\label{tab:component_ablation}

\renewcommand{\arraystretch}{1.08}

\begin{tabular}{@{}lccc@{}}
\toprule
Variant
& PSNR$\uparrow$
& SSIM$\uparrow$
& LPIPS$\downarrow$ \\
\midrule

Full
& \textbf{29.133}
& \textbf{0.8368}
& \textbf{0.1322} \\

Full $-\mathcal{P}_t$
& 29.084
& 0.8366
& 0.1342 \\

Full w/ $G_t\equiv1$
& 29.084
& 0.8365
& 0.1343 \\

Full $-\mathcal{R}_t$
& 24.446
& 0.5659
& 0.3243 \\

\bottomrule
\end{tabular}
\end{table}

\paragraph{Results on ImageNet.}
On the more diverse ImageNet dataset, our method remains within the top two for both fidelity metrics in every task. It achieves the best Gaussian deblurring result, exceeding DPS-MAP by $0.069$ dB in PSNR and $0.003$ in SSIM. %
Under uniform deblurring, the two methods are nearly tied in PSNR: our result is only $0.016$ dB below DPS-MAP, while attaining the highest SSIM of $0.724$. Thus, although ImageNet motion deblurring remains a more challenging setting, the proposed guidance gives consistently strong fidelity across datasets and operators rather than being specialized to facial images.

\paragraph{Qualitative comparison.}
Figure~\ref{fig:qualitative_comparison} provides representative visual comparisons. Our method preserves facial contours and fine structures on
FFHQ and recovers clear object details and boundaries on ImageNet, while ImageNet motion deblurring remains the most challenging setting. Additional examples and enlarged crops are provided in the supplementary material.

\subsection{Ablation Studies}
\label{sec:ablation}
We first isolate the contributions of the main guidance components and then
examine the robustness to stronger measurement
noise. The operator-specific correction and additional fine-grained ablations, {two-branch inference} analysis, and sensitivity analyses are provided in the supplementary material.

\begin{table}[t]
\centering
\caption{
Quantitative comparison on FFHQ under stronger measurement noise
$\sigma_y=0.10$.
Each result is averaged over $1{,}000$ FFHQ evaluation images, with
identical degraded measurements used across methods within each task.
}
\label{tab:strong_noise_ffhq}

\small
\setlength{\tabcolsep}{1.5pt}
\renewcommand{\arraystretch}{1.05}

\begin{tabular}{@{}lcccccc@{}}
\toprule
& \multicolumn{2}{c}{Gaussian}
& \multicolumn{2}{c}{Motion}
& \multicolumn{2}{c}{Uniform} \\
\cmidrule(lr){2-3}
\cmidrule(lr){4-5}
\cmidrule(lr){6-7}

Method
& PSNR$\uparrow$ & SSIM$\uparrow$
& PSNR$\uparrow$ & SSIM$\uparrow$
& PSNR$\uparrow$ & SSIM$\uparrow$ \\
\midrule

DPS
& 24.540 & 0.6758
& 21.784 & 0.5893
& 23.010 & 0.6204 \\

DSG
& 26.893 & 0.7596
& 25.432 & 0.7215
& 26.077 & 0.7260 \\

FGPS
& 26.736 & 0.7528
& 25.626 & 0.7216
& 25.916 & 0.7197 \\

DPS-MAP
& 28.364 & 0.8095
& 27.396 & 0.7836
& 27.414 & 0.7768 \\

DMPS
& 28.444 & 0.8120
& 27.367 & 0.7842
& 27.483 & 0.7792 \\

\textbf{Ours}
& \textbf{28.720} & \textbf{0.8198}
& \textbf{27.536} & \textbf{0.7907}
& \textbf{27.797} & \textbf{0.7918} \\

\bottomrule
\end{tabular}
\end{table}

\paragraph{Component ablation.}
Table~\ref{tab:component_ablation} compares the full method with variants that omit one component at a time.
Removing the spectral prior or disabling progressive frequency
activation reduces PSNR by about $0.05$ dB in this setting, whereas
removing the trajectory-stabilization term causes a much larger
$4.687$ dB drop together with a substantial decrease in SSIM.

\paragraph{Robustness and operator-specific correction.}
Table~\ref{tab:strong_noise_ffhq} reports the FFHQ comparison under
stronger measurement noise ($\sigma_{\mathrm y}=0.10$). Our method retains the best PSNR and SSIM on all three tasks, exceeding
the strongest competing PSNR by $0.276$, $0.140$, and $0.314$ dB for
Gaussian, motion, and uniform deblurring, respectively.

\section{Conclusion}
\label{sec:conclusion}
In this paper, we introduced frequency-decoupled posterior guidance, a training-free framework that separates the diffusion-time activation of measurement frequencies from their reliability under the degradation operator. A progressive low-to-high schedule specifies the active measurement band, while a kernel-derived attenuation map weights the remaining inactive frequencies for spectral regularization. We provided a KL-regularized path-space interpretation of the fixed endpoint energy and derived practical stepwise guidance through a Tweedie plug-in approximation. Experiments on challenging non-blind deblurring tasks demonstrate consistent improvements in reconstruction fidelity and robustness to increased measurement noise. These results highlight the importance of jointly considering the evolving frequency content of reverse diffusion and the spectral properties of the degradation operator, and motivate further extensions to blind and nonlinear inverse problems as well as more efficient single-trajectory sampling.

\clearpage

\subsubsection*{Acknowledgements}

This work was supported by the National Natural Science Foundation of China under Grant 12531011. 
The authors thank the Chern Institute of Mathematics, Nankai University, for its hospitality and support during the corresponding
author's visit, during which part of this work was carried out. We also thank Ji Li for kindly providing the DPS-MAP implementation used in our experiments and thank the authors of the open-source implementations used in this work
for making their code publicly available.

\bibliography{reference}

\clearpage
\appendix
\thispagestyle{empty}

\onecolumn
\aistatstitle{Frequency-Decoupled Diffusion Guidance for Non-Blind Image Deblurring: \\
Supplementary Materials}

This supplementary material provides theoretical proofs, experimental details, and additional analyses for the main paper. Section A presents the proofs of the path-space variational formula and the approximation error bounds. Section B details the datasets, degradation operators, method configurations, and baselines. Section C reports complementary perceptual metrics, complete component ablations, and two-branch inference analysis. Section D provides paired statistical comparisons. Section E analyzes hyperparameter sensitivity, Section F provides additional qualitative comparisons on FFHQ and ImageNet, and Section G reports detailed computational cost. Together, these results substantiate the theoretical claims and empirical effectiveness of the proposed method.

\section{Proofs}
\label{app:proof_path_regularization}
In this subsection, we prove the path-space variational formula underlying the energy tilt and
derive the resulting marginal score correction.  We then quantify the error of evaluating the frequency-decoupled energy at a conditional mean.  The
first result is the Gibbs variational principle, also known as the relative-entropy variational formula; see, e.g.,
\citep{donsker1975asymptotic,dupuis1997weak}.

\subsection{Endpoint tilting on path space}
\label{app:endpoint_tilting}

Let $(\mathcal X^{T+1},\mathcal B(\mathcal X)^{\otimes(T+1)})$ be the path
space and let $P_\Theta$ be the reference path law of
$(\mathrm X_0,\ldots,\mathrm X_T)$.  For a measurable endpoint energy
$\Phi_{\mathrm y}:\mathcal X\to[0,+\infty]$, define the extended-valued
functional
\begin{equation}
    \mathcal J_{\mathrm y}(P)
    :=
    \operatorname{KL}(P\|P_\Theta)
    +\mathbb E_P\!\left[\Phi_{\mathrm y}(\mathrm X_0)\right]
    \label{eq:app_path_objective}
\end{equation}
on probability measures $P$ on the path space.  Here,
$$
\operatorname{KL}(P\|P_\Theta)=+\infty, 
\text{ if } P\not\ll P_\Theta.
$$

\begin{proposition}[Gibbs minimizer of the path-space problem]
\label{prop:app_gibbs_path}
Assume that $\Phi_{\mathrm y}(\mathrm X_0)<+\infty$
$P_\Theta$-almost surely, and set
$Z_{\mathrm y}
    :=
    \mathbb E_{P_\Theta}
    \!\left[\exp\{-\Phi_{\mathrm y}(\mathrm X_0)\}\right].$
Then $0<Z_{\mathrm y}\leq1$, and the problem
\begin{equation}
    \inf_{P}\mathcal J_{\mathrm y}(P)
    \label{eq:app_path_problem}
\end{equation}
has the unique minimizer $P_{\mathrm y}^{\star}$ given by
\begin{equation}
    \frac{\mathrm dP_{\mathrm y}^{\star}}
         {\mathrm dP_\Theta}(\mathrm x_{0:T})
    =
    \frac{\exp\{-\Phi_{\mathrm y}(\mathrm x_0)\}}
         {Z_{\mathrm y}}.
    \label{eq:app_gibbs_path_density}
\end{equation}
Moreover,
\begin{equation}
    \inf_P\mathcal J_{\mathrm y}(P)=-\log Z_{\mathrm y}.
    \label{eq:app_gibbs_value}
\end{equation}
\end{proposition}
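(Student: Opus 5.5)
The plan is the standard Donsker--Varadhan argument: rewrite $\mathcal J_{\mathrm y}(P)$ as a relative entropy with respect to the tilted measure $P_{\mathrm y}^\star$, plus the constant $-\log Z_{\mathrm y}$.

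First I bound $Z_{\mathrm y}$. Since $\Phi_{\mathrm y}\ge 0$, we have $0\le e^{-\Phi_{\mathrm y}(\mathrm X_0)}\le 1$, so $Z_{\mathrm y}\le 1$. Since $\Phi_{\mathrm y}(\mathrm X_0)<\infty$ $P_\Theta$-a.s., the integrand $e^{-\Phi_{\mathrm y}(\mathrm X_0)}$ is strictly positive a.s. A nonnegative function that is a.s. positive has positive integral, because $\{e^{-\Phi}>1/n\}$ increases to a full-measure set and hence has positive measure for some $n$. Therefore $Z_{\mathrm y}>0$.

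This lets me define $P_{\mathrm y}^\star$ by the density \eqref{eq:app_gibbs_path_density}. It is a probability measure, and $P_{\mathrm y}^\star$ and $P_\Theta$ are mutually absolutely continuous because the density is a.s. positive and finite.

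Next I reduce to $P\ll P_\Theta$ with $\operatorname{KL}(P\|P_\Theta)<\infty$. Otherwise $\mathcal J_{\mathrm y}(P)=+\infty$, because the expectation term is $\ge 0$ and so cannot cancel an infinite KL. For such $P$, mutual absolute continuity gives $P\ll P_{\mathrm y}^\star$ and the chain rule
\[
\frac{\mathrm dP}{\mathrm dP_\Theta}=\frac{\mathrm dP}{\mathrm dP_{\mathrm y}^\star}\cdot\frac{e^{-\Phi_{\mathrm y}(\mathrm x_0)}}{Z_{\mathrm y}}.
\]
Taking logs and integrating against $P$ then yields
\[
\operatorname{KL}(P\|P_\Theta)=\operatorname{KL}(P\|P_{\mathrm y}^\star)-\mathbb E_P[\Phi_{\mathrm y}(\mathrm X_0)]-\log Z_{\mathrm y},
\]
that is, $\mathcal J_{\mathrm y}(P)=\operatorname{KL}(P\|P_{\mathrm y}^\star)-\log Z_{\mathrm y}$.

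The main obstacle is justifying this splitting of the integral. The term $\mathbb E_P[\Phi_{\mathrm y}]$ could be $+\infty$, and $\log\frac{\mathrm dP}{\mathrm dP_{\mathrm y}^\star}$ need not be integrable a priori, so expressions like $\infty-\infty$ must be ruled out. I would handle this by working with positive and negative parts. The negative part of $\log\frac{\mathrm dP}{\mathrm dQ}$ is always $P$-integrable for probability measures, since $\int (\log f)^-\,f\,\mathrm dQ\le e^{-1}$. Using $\log\frac{\mathrm dP}{\mathrm dP_\Theta}=\log\frac{\mathrm dP}{\mathrm dP_{\mathrm y}^\star}-\Phi_{\mathrm y}-\log Z_{\mathrm y}$ with $\Phi_{\mathrm y}\ge0$, finiteness of $\operatorname{KL}(P\|P_\Theta)$ forces the positive part of $\log\frac{\mathrm dP}{\mathrm dP_{\mathrm y}^\star}$ to be integrable whenever $\mathbb E_P\Phi_{\mathrm y}<\infty$. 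When $\mathbb E_P\Phi_{\mathrm y}=\infty$, both sides of the identity are $+\infty$. To make this rigorous I would split cleanly by computing $\mathcal J_{\mathrm y}(P)=\int \bigl(\log\frac{\mathrm dP}{\mathrm dP_\Theta}+\Phi_{\mathrm y}\bigr)\,\mathrm dP$. Its integrand is exactly $\log\frac{\mathrm dP}{\mathrm dP_{\mathrm y}^\star}-\log Z_{\mathrm y}$, whose negative part is integrable, so the integral is well defined in $(-\infty,+\infty]$ and the identity holds in extended arithmetic.

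Finally, $\operatorname{KL}(P\|P_{\mathrm y}^\star)\ge 0$ by Jensen's inequality, with equality iff $P=P_{\mathrm y}^\star$. Hence $\mathcal J_{\mathrm y}(P)\ge -\log Z_{\mathrm y}$ for every $P$, with equality only at $P_{\mathrm y}^\star$. It remains to check that $\mathcal J_{\mathrm y}(P_{\mathrm y}^\star)=-\log Z_{\mathrm y}$ is actually finite and attained. Directly, $\operatorname{KL}(P_{\mathrm y}^\star\|P_\Theta)=-\mathbb E_{P^\star}\Phi_{\mathrm y}-\log Z_{\mathrm y}$, and $\mathbb E_{P^\star}\Phi_{\mathrm y}=Z_{\mathrm y}^{-1}\mathbb E_{P_\Theta}[\Phi_{\mathrm y} e^{-\Phi_{\mathrm y}}]\le (eZ_{\mathrm y})^{-1}<\infty$. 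This gives existence, uniqueness of the minimizer, and \eqref{eq:app_gibbs_value}.
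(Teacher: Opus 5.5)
Your proposal is correct and follows essentially the same route as the paper: define the tilted measure $P_{\mathrm y}^{\star}$, note that it is mutually absolutely continuous with $P_\Theta$, verify $\mathcal J_{\mathrm y}(P_{\mathrm y}^{\star})=-\log Z_{\mathrm y}$ via $\mathbb E_{P_{\mathrm y}^{\star}}[\Phi_{\mathrm y}(\mathrm X_0)]\le (eZ_{\mathrm y})^{-1}$, and use the Radon--Nikodym chain rule to get $\mathcal J_{\mathrm y}(P)=\mathrm{KL}(P\|P_{\mathrm y}^{\star})-\log Z_{\mathrm y}$. The only difference is in the integrability bookkeeping: the paper restricts to $P$ with $\mathcal J_{\mathrm y}(P)<\infty$, so $\mathrm{KL}(P\|P_\Theta)$ and $\mathbb E_P[\Phi_{\mathrm y}(\mathrm X_0)]$ are both finite and the splitting is immediate, whereas you restrict only to finite $\mathrm{KL}(P\|P_\Theta)$ and treat $\mathbb E_P[\Phi_{\mathrm y}(\mathrm X_0)]=+\infty$ through positive and negative parts, which is valid but unnecessary because the lower bound is trivial whenever $\mathcal J_{\mathrm y}(P)=+\infty$.
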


\begin{proof}
Since $0<\exp\{-\Phi_{\mathrm y}(\mathrm X_0)\}\leq1$
$P_\Theta$-almost surely, its expectation satisfies
$0<Z_{\mathrm y}\leq1$.  Define, for every measurable set
$B\subseteq\mathcal X^{T+1}$,
\begin{equation}
    P_{\mathrm y}^{\star}(B)
    :=
    \frac{1}{Z_{\mathrm y}}
    \int_B
    \exp\{-\Phi_{\mathrm y}(\mathrm x_0)\}
    P_\Theta(\mathrm d\mathrm x_{0:T}).
    \label{eq:app_tilt_measure_definition}
\end{equation}
The integrand is nonnegative, and the definition of $Z_{\mathrm y}$ gives
\begin{equation}
\begin{aligned}
    P_{\mathrm y}^{\star}(\mathcal X^{T+1})
    &=
    \frac{1}{Z_{\mathrm y}}
    \int_{\mathcal X^{T+1}}
    \exp\{-\Phi_{\mathrm y}(\mathrm x_0)\}
    P_\Theta(\mathrm d\mathrm x_{0:T})
    \\
    &=\frac{Z_{\mathrm y}}{Z_{\mathrm y}}=1.
\end{aligned}
\end{equation}
Thus $P_{\mathrm y}^{\star}$ is a probability measure and
\eqref{eq:app_gibbs_path_density} is its Radon--Nikodym derivative with
respect to $P_\Theta$.

We next clarify the relation between the two measures.  By construction,
$P_{\mathrm y}^{\star}\ll P_\Theta$.  Conversely, the assumption
$\Phi_{\mathrm y}(\mathrm X_0)<+\infty$ $P_\Theta$-almost surely implies
\begin{equation}
    \frac{\exp\{-\Phi_{\mathrm y}(\mathrm X_0)\}}{Z_{\mathrm y}}>0,
    \qquad P_\Theta\text{-almost surely}.
\end{equation}
Hence, if $P_{\mathrm y}^{\star}(B)=0$, then
\begin{equation}
    \int_B
    \frac{\exp\{-\Phi_{\mathrm y}(\mathrm x_0)\}}{Z_{\mathrm y}}
    P_\Theta(\mathrm d\mathrm x_{0:T})=0.
\end{equation}
Since the integrand is strictly positive $P_\Theta$-almost surely, this
identity forces $P_\Theta(B)=0$.  Therefore
$P_\Theta\ll P_{\mathrm y}^{\star}$ as well. Hence, $P_{\mathrm y}^{\star}$ and $P_\Theta$ are mutually
absolutely continuous and have the same null sets, i.e., equivalent.

By the inequality $r e^{-r}\leq e^{-1}$ for $r\geq0$, we have 
\begin{equation}
    \mathbb E_{P_{\mathrm y}^{\star}}
    [\Phi_{\mathrm y}(\mathrm X_0)]
    =\frac{1}{Z_{\mathrm y}}
      \mathbb E_{P_\Theta}
      [\Phi_{\mathrm y}(\mathrm X_0)e^{-\Phi_{\mathrm y}(\mathrm X_0)}]
    \leq \frac{1}{eZ_{\mathrm y}}<+\infty.
    \label{eq:app_tilt_energy_integrable}
\end{equation}
Furthermore,
\begin{equation}
\begin{aligned}
    \log\frac{\mathrm dP_{\mathrm y}^{\star}}{\mathrm dP_\Theta}
    &=
    \log\left(
        \frac{e^{-\Phi_{\mathrm y}(\mathrm X_0)}}{Z_{\mathrm y}}
    \right)
    \\
    &=
    -\Phi_{\mathrm y}(\mathrm X_0)-\log Z_{\mathrm y}.
\end{aligned}
\label{eq:app_log_tilt_density}
\end{equation}
The integrability established in~\eqref{eq:app_tilt_energy_integrable} gives
\begin{equation}
\begin{aligned}
    \operatorname{KL}(P_{\mathrm y}^{\star}\|P_\Theta)
    &=
    \mathbb E_{P_{\mathrm y}^{\star}}
    \left[
        \log\frac{\mathrm dP_{\mathrm y}^{\star}}{\mathrm dP_\Theta}
    \right]
    \\
    &=
    -\mathbb E_{P_{\mathrm y}^{\star}}
      [\Phi_{\mathrm y}(\mathrm X_0)]
    -\log Z_{\mathrm y}.
    \nonumber
\end{aligned}
\end{equation}
Substituting this equality into
\eqref{eq:app_path_objective} yields
\begin{equation}
\begin{aligned}
    \mathcal J_{\mathrm y}(P_{\mathrm y}^{\star})=-\log Z_{\mathrm y}.
    \nonumber
\end{aligned}
\end{equation}

Now let $P$ satisfy $\mathcal J_{\mathrm y}(P)<+\infty$.  Then
$\operatorname{KL}(P\|P_\Theta)<+\infty$ and
$\mathbb E_P[\Phi_{\mathrm y}(\mathrm X_0)]<+\infty$.  In particular,
$P\ll P_\Theta$, and mutual absolute continuity proved above gives
$P\ll P_{\mathrm y}^{\star}$.  The chain rule for Radon--Nikodym
derivatives yields, $P$-almost surely,
\begin{equation}
\begin{aligned}
    \frac{\mathrm dP}{\mathrm dP_{\mathrm y}^{\star}}
    =
    \frac{\mathrm dP}{\mathrm dP_\Theta}
    \frac{\mathrm dP_\Theta}{\mathrm dP_{\mathrm y}^{\star}}
    =
    \frac{\mathrm dP}{\mathrm dP_\Theta}
    Z_{\mathrm y}e^{\Phi_{\mathrm y}(\mathrm X_0)}.
    \label{eq:app_rn_chain_rule}
\end{aligned}
\end{equation}
Taking logarithms in~\eqref{eq:app_rn_chain_rule} gives
\begin{equation}
\begin{aligned}
    \log\frac{\mathrm dP}{\mathrm dP_{\mathrm y}^{\star}}
    &=
    \log\frac{\mathrm dP}{\mathrm dP_\Theta}
    +\Phi_{\mathrm y}(\mathrm X_0)+\log Z_{\mathrm y}.
    \label{eq:app_log_rn_chain_rule}
\end{aligned}
\end{equation}
All terms on the right-hand side are $P$-integrable under the present
finite-objective assumption.  Taking the $P$-expectation in
\eqref{eq:app_log_rn_chain_rule}, we obtain
\begin{equation}
\begin{aligned}
    \operatorname{KL}(P\|P_{\mathrm y}^{\star})
    &=
    \operatorname{KL}(P\|P_\Theta)
    +\mathbb E_P[\Phi_{\mathrm y}(\mathrm X_0)]
    +\log Z_{\mathrm y}
    \\
    &=\mathcal J_{\mathrm y}(P)+\log Z_{\mathrm y}.
\end{aligned}
    \label{eq:app_kl_decomposition}
\end{equation}
Equivalently,
\begin{equation}
    \mathcal J_{\mathrm y}(P)
    =
    \operatorname{KL}(P\|P_{\mathrm y}^{\star})-\log Z_{\mathrm y}.
\end{equation}
It then follows from the nonnegativity of relative entropy that $\mathcal J_{\mathrm y}(P)\geq-\log Z_{\mathrm y}$; this is also trivial
when $\mathcal J_{\mathrm y}(P)=+\infty$. This equality holds if and only if
$\operatorname{KL}(P\|P_{\mathrm y}^{\star})=0$, which is equivalent to
$P=P_{\mathrm y}^{\star}$ and completes the proof.
\end{proof}

\begin{corollary}[Tilted marginal and score correction]
\label{cor:app_tilted_score}
Fix $t\in\{1,\ldots,T\}$.  Suppose the reference marginal of
$\mathrm X_t$ has a strictly positive differentiable density
$p_{\Theta,t}$ and that a regular conditional distribution
$P_\Theta(\mathrm d\mathrm x_0\mid\mathrm x_t)$ exists.  Define
\begin{equation}
    Z_t(\mathrm x_t;\mathrm y)
    :=
    \int
    e^{-\Phi_{\mathrm y}(\mathrm x_0)}
    P_\Theta(\mathrm d\mathrm x_0\mid\mathrm x_t).
    \label{eq:app_local_partition}
\end{equation}
Then a version of the $t$-marginal density of $P_{\mathrm y}^{\star}$ is
\begin{equation}
    p_{\mathrm y,t}^{\star}(\mathrm x_t)
    =
    \frac{p_{\Theta,t}(\mathrm x_t)
    Z_t(\mathrm x_t;\mathrm y)}{Z_{\mathrm y}}.
    \label{eq:app_tilted_marginal}
\end{equation}
for $p_{\Theta,t}$-almost every $\mathrm{x}_t$.
At every point where $Z_t(\cdot;\mathrm y)$ is positive and differentiable,
\begin{equation}
    \nabla_{\mathrm x_t}\log p_{\mathrm y,t}^{\star}(\mathrm x_t)
    =
    \nabla_{\mathrm x_t}\log p_{\Theta,t}(\mathrm x_t)
    +\nabla_{\mathrm x_t}\log Z_t(\mathrm x_t;\mathrm y).
    \label{eq:app_exact_score_correction}
\end{equation}
\end{corollary}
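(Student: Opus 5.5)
The plan is to compute the $t$-marginal of $P_{\mathrm y}^{\star}$ directly from the Radon--Nikodym density in Proposition~\ref{prop:app_gibbs_path}, using disintegration with respect to $\mathrm X_t$. Fix a measurable set $B\subseteq\mathcal X$. By \eqref{eq:app_gibbs_path_density},
\begin{equation}
    P_{\mathrm y}^{\star}(\mathrm X_t\in B)
    =\frac{1}{Z_{\mathrm y}}
    \mathbb E_{P_\Theta}\!\left[\mathbf 1_B(\mathrm X_t)\,e^{-\Phi_{\mathrm y}(\mathrm X_0)}\right].
    \nonumber
\end{equation}
The integrand is nonnegative and bounded by one, so conditioning on $\mathrm X_t$ is legitimate. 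Using the regular conditional distribution $P_\Theta(\mathrm d\mathrm x_0\mid\mathrm x_t)$ and the tower property, I will replace $e^{-\Phi_{\mathrm y}(\mathrm X_0)}$ by $Z_t(\mathrm X_t;\mathrm y)$ from \eqref{eq:app_local_partition}. This also requires noting that $Z_t(\cdot;\mathrm y)$ is measurable, which is a standard property of kernels applied to bounded measurable functions. The result is
\begin{equation}
    P_{\mathrm y}^{\star}(\mathrm X_t\in B)
    =\frac{1}{Z_{\mathrm y}}\int_B Z_t(\mathrm x_t;\mathrm y)\,p_{\Theta,t}(\mathrm x_t)\,\mathrm d\mathrm x_t .
    \nonumber
\end{equation}
Since $B$ is arbitrary, $p_{\Theta,t}Z_t/Z_{\mathrm y}$ is a density of the $t$-marginal of $P_{\mathrm y}^{\star}$. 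This proves \eqref{eq:app_tilted_marginal}. Because the regular conditional distribution is unique only up to $p_{\Theta,t}$-null sets, the formula identifies only a version of the density, valid $p_{\Theta,t}$-a.e.; this matches the qualification in the statement.

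For the score identity \eqref{eq:app_exact_score_correction}, I take the representative \eqref{eq:app_tilted_marginal} as the density. At a point where $Z_t(\cdot;\mathrm y)$ is positive and differentiable, the factor $p_{\Theta,t}$ is strictly positive and differentiable by hypothesis, so the product $p_{\Theta,t}Z_t/Z_{\mathrm y}$ is positive and differentiable there. Taking logarithms gives
\[
\log p^{\star}_{\mathrm y,t}=\log p_{\Theta,t}+\log Z_t-\log Z_{\mathrm y},
\]
where $Z_{\mathrm y}\in(0,1]$ is a constant by Proposition~\ref{prop:app_gibbs_path}. Differentiating then yields \eqref{eq:app_exact_score_correction}.

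The main subtlety is the disintegration step together with the version issue. The gradient in \eqref{eq:app_exact_score_correction} is meaningful only for the specific representative chosen, not for an arbitrary a.e.-equal density. I will therefore state explicitly that $p^{\star}_{\mathrm y,t}$ denotes that representative. Once that convention is fixed, the remaining steps are routine.
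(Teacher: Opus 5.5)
Your proposal is correct and follows essentially the same route as the paper. The paper also writes the $P_{\mathrm y}^{\star}$-expectation of a test function of $\mathrm X_t$ via the Radon--Nikodym density, uses the tower property to replace $e^{-\Phi_{\mathrm y}(\mathrm X_0)}$ by $Z_t(\mathrm X_t;\mathrm y)$, reads off the density, and differentiates its logarithm. The paper uses general bounded measurable $f$ where you use indicators $\mathbf 1_B$; your remarks on the measurability of $Z_t$ and on fixing the representative before differentiating make explicit what the paper leaves implicit.
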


\begin{proof}
Let $f$ be any bounded measurable function on $\mathcal X$.  From
\eqref{eq:app_gibbs_path_density},
\begin{equation}
\begin{aligned}
    \mathbb E_{P_{\mathrm y}^{\star}}[f(\mathrm X_t)]
    &=
    \frac{1}{Z_{\mathrm y}}
    \mathbb E_{P_\Theta}
    \left[
        f(\mathrm X_t)e^{-\Phi_{\mathrm y}(\mathrm X_0)}
    \right].
    \label{eq:app_test_function_first}
\end{aligned}
\end{equation}
Applying the tower property conditional on $\mathrm X_t$ gives
\begin{equation}
\begin{aligned}
    \mathbb E_{P_\Theta}
    \left[
        f(\mathrm X_t)e^{-\Phi_{\mathrm y}(\mathrm X_0)}
    \right]
    &=
    \mathbb E_{P_\Theta}
    \left[
        f(\mathrm X_t)
        \mathbb E_{P_\Theta}
        \left[
            e^{-\Phi_{\mathrm y}(\mathrm X_0)}
            \mid \mathrm X_t
        \right]
    \right]
    \\
    &=
    \int
        f(\mathrm x_t)Z_t(\mathrm x_t;\mathrm y)
        p_{\Theta,t}(\mathrm x_t)
        \,\mathrm d\mathrm x_t.
    \label{eq:app_disintegration_step}
\end{aligned}
\end{equation}
Combining~\eqref{eq:app_test_function_first} and
\eqref{eq:app_disintegration_step},
\begin{equation}
    \mathbb E_{P_{\mathrm y}^{\star}}[f(\mathrm X_t)]
    =
    \int f(\mathrm x_t)
    \frac{p_{\Theta,t}(\mathrm x_t)
          Z_t(\mathrm x_t;\mathrm y)}{Z_{\mathrm y}}
    \,\mathrm d\mathrm x_t.
\end{equation}
Since this identity holds for every bounded measurable $f$, the expression
inside the integral is the marginal density of $\mathrm X_t$ under
$P_{\mathrm y}^{\star}$, which proves~\eqref{eq:app_tilted_marginal}.

Taking the gradient of the logarithm of~\eqref{eq:app_tilted_marginal} completes the proof.
\end{proof}

Corollary~\ref{cor:app_tilted_score} connects the global path-space tilt to a local modification of the diffusion dynamics.  Equation
\eqref{eq:app_tilted_marginal} states that the endpoint energy reweights the
reference marginal at time $t$ by the conditional factor $Z_t(\mathrm x_t;\mathrm y)$.  Equation
\eqref{eq:app_exact_score_correction} then shows that the score of this
reweighted marginal is obtained by adding
$\nabla_{\mathrm x_t}\log Z_t(\mathrm x_t;\mathrm y)$ to the score of the reference marginal induced by the pretrained reverse process.
This additional term is the energy-guidance direction for a fixed endpoint energy. The practical method approximates this term because
the conditional expectation defining $Z_t$ is not evaluated exactly.

\subsection{Error of conditional-mean energy substitution}
\label{app:conditional_mean_substitution}
In this subsection, we analyze the approximation error introduced by evaluating the step-dependent frequency energy at the conditional mean. We further account for the discrepancy between the exact conditional mean and the learned Tweedie estimate, and relate the resulting approximation to the local log-partition function.
{All conditional expectations and covariances in this subsection are taken under the reference law $P_\Theta$.} Denote $\mathrm m_t:=
    \mathbb E_{P_\Theta}[\mathrm X_0\mid\mathrm X_t=\mathrm x_t],
    \mathbb E_t[\cdot]
    :=
    \mathbb E_{P_\Theta}[\cdot\mid\mathrm X_t=\mathrm x_t]$. We next quantify the plug-in approximation
\begin{equation}
    \mathbb E[\Phi_{t,\mathrm y}(\mathrm X_0)\mid
    \mathrm X_t=\mathrm x_t]
    \approx
    \Phi_{t,\mathrm y}(\mathrm m_t).
    \label{eq:app_mean_substitution}
\end{equation}

\begin{proposition}[Conditional-mean approximation bound]
\label{prop:app_mean_bound}
Fix $t$ and a state $\mathrm x_t$ for which the conditional second moment of
$\mathrm X_0$ is finite.  Let $N$ denote the number of scalar Fourier coefficients over which the averaged norms in the main text are computed, and let $\mathcal F$ be the DFT used in the main text.  Assume
$\lambda_{\mathrm{lp},t},\lambda_{\mathrm{prior},t}\geq0$ and set
\begin{equation}
\begin{aligned}
    M_t&:=(1-G_t)W_\kappa,
    &
    \mathcal K_t&:=\operatorname{Diag}(G_t)\mathcal F\mathrm A,
    \\
    \mathrm b_t&:=\operatorname{Diag}(G_t)\mathcal F\mathrm y,
    &
    \mathcal D_t&:=\operatorname{Diag}(M_t)\mathcal F.
\end{aligned}
\label{eq:app_frequency_operators}
\end{equation}
Then
\begin{equation}
\begin{aligned}
0
&\leq
\mathbb E[\Phi_{t,\mathrm y}(\mathrm X_0)\mid
\mathrm X_t=\mathrm x_t]
-\Phi_{t,\mathrm y}(\mathrm m_t)
\\
&\leq
\frac{\lambda_{\mathrm{lp},t}}{N}
\operatorname{tr}(\mathcal K_t\mathrm C_t\mathcal K_t^*)
+
\frac{\lambda_{\mathrm{prior},t}}{\sqrt N}
\sqrt{\operatorname{tr}(\mathcal D_t\mathrm C_t\mathcal D_t^*)},
\end{aligned}
\label{eq:app_frequency_energy_error_bound}
\end{equation}
where $\mathrm C_t
    :=\operatorname{Cov}_{P_\Theta}(\mathrm X_0\mid\mathrm X_t=\mathrm x_t).$
In particular, the mean substitution is exact if
$\mathcal K_t\mathrm C_t\mathcal K_t^*=0$ and
$\mathcal D_t\mathrm C_t\mathcal D_t^*=0$.
\end{proposition}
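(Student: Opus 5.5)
We split the energy into its two terms and treat each separately, working throughout under the conditional law $\mathbb E_t[\cdot]$ given $\mathrm X_t=\mathrm x_t$. Write $\Phi_{t,\mathrm y}(\mathrm x)=\lambda_{\mathrm{lp},t}\,\phi_1(\mathrm x)+\lambda_{\mathrm{prior},t}\,\phi_2(\mathrm x)$, where
\[
\phi_1(\mathrm x)=\tfrac1N\|\mathcal K_t\mathrm x-\mathrm b_t\|_2^2,
\qquad
\phi_2(\mathrm x)=\tfrac1N\|\mathcal D_t\mathrm x\|_1 .
\]
These forms follow because $G_t\odot\mathcal F(\mathrm A\mathrm x-\mathrm y)=\mathcal K_t\mathrm x-\mathrm b_t$ and $M_t\odot\mathcal F(\mathrm x)=\mathcal D_t\mathrm x$, with the $\|\cdot\|_{q,\mathrm{avg}}$ normalizations of Sec.~\ref{sec:notation}. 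Both $\phi_1$ and $\phi_2$ are convex. The finite conditional second moment ensures that $\mathrm m_t$ and $\mathrm C_t$ exist and that both terms are conditionally integrable. Conditional Jensen then gives $\mathbb E_t[\phi_i(\mathrm X_0)]\ge\phi_i(\mathrm m_t)$. Multiplying by the nonnegative weights and summing yields the lower bound $0$.

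For the quadratic term we compute exactly using the bias--variance decomposition. Set $\mathrm r=\mathcal K_t\mathrm m_t-\mathrm b_t$ and $\xi=\mathcal K_t(\mathrm X_0-\mathrm m_t)$, so that $\mathbb E_t\xi=0$. Then
\[
\mathbb E_t\|\mathrm r+\xi\|^2=\|\mathrm r\|^2+2\,\mathrm{Re}\langle \mathrm r,\mathbb E_t\xi\rangle+\mathbb E_t\|\xi\|^2=\|\mathrm r\|^2+\operatorname{tr}(\mathcal K_t\mathrm C_t\mathcal K_t^*).
\]
The last equality uses $\mathbb E_t\|\xi\|^2=\operatorname{tr}(\mathcal K_t\,\mathbb E_t[(\mathrm X_0-\mathrm m_t)(\mathrm X_0-\mathrm m_t)^\top]\,\mathcal K_t^*)$, which holds because $\mathrm X_0$ is real and $\mathcal K_t$ is complex-linear. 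Dividing by $N$ shows that the gap in the first term is exactly $\tfrac1N\operatorname{tr}(\mathcal K_t\mathrm C_t\mathcal K_t^*)$.

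For the $\ell_1$ term we argue in three steps.
\begin{itemize}
\item The reverse triangle inequality gives $\|\mathcal D_t\mathrm X_0\|_1-\|\mathcal D_t\mathrm m_t\|_1\le\|\mathcal D_t(\mathrm X_0-\mathrm m_t)\|_1$.
\item Cauchy--Schwarz over the $N$ coordinates gives $\|\cdot\|_1\le\sqrt N\|\cdot\|_2$.
\item Conditional Jensen for the concave square root gives $\mathbb E_t\|\eta\|_2\le(\mathbb E_t\|\eta\|_2^2)^{1/2}$, where $\eta=\mathcal D_t(\mathrm X_0-\mathrm m_t)$.
\end{itemize}
As before, $\mathbb E_t\|\eta\|_2^2=\operatorname{tr}(\mathcal D_t\mathrm C_t\mathcal D_t^*)$. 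Combining the three steps, the gap in $\phi_2$ is at most $\tfrac1N\sqrt N\sqrt{\operatorname{tr}(\mathcal D_t\mathrm C_t\mathcal D_t^*)}$. Weighting both gaps by $\lambda_{\mathrm{lp},t}$ and $\lambda_{\mathrm{prior},t}$ and summing gives \eqref{eq:app_frequency_energy_error_bound}.

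For the exactness claim, suppose $\mathcal K_t\mathrm C_t\mathcal K_t^*=0$ and $\mathcal D_t\mathrm C_t\mathcal D_t^*=0$. Then the upper bound vanishes, and together with the lower bound the gap equals zero.

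The main obstacle is bookkeeping rather than ideas. We must check that the normalization constants match: the $N$ in the averaged norms must be the same $N$ for both terms, and the $\sqrt N$ factor must come from comparing $\ell_1$ to $\ell_2$. We must also justify the trace identity carefully for a complex operator applied to a real random vector, namely $\mathbb E\|\mathcal K\xi\|^2=\operatorname{tr}(\mathcal K\,\mathrm C\,\mathcal K^*)$ with $\mathcal K^*$ the conjugate transpose. Finally, integrability must be verified before splitting any expectations, so that no $\infty-\infty$ arises; this is guaranteed by the finite second moment and the boundedness of the linear maps.
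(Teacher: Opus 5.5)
Your proposal is correct and follows essentially the same route as the paper: you rewrite the energy through $\mathcal K_t$ and $\mathcal D_t$ with the $1/N$ averaged-norm normalizations, obtain the lower bound from convexity and conditional Jensen, compute the quadratic gap exactly as $\operatorname{tr}(\mathcal K_t\mathrm C_t\mathcal K_t^*)$, and bound the $\ell_1$ gap using the triangle inequality, $\|\cdot\|_1\le\sqrt N\|\cdot\|_2$, and $\mathbb E_t\|\cdot\|_2\le(\mathbb E_t\|\cdot\|_2^2)^{1/2}$. The only cosmetic difference is that you justify the last inequality by Jensen for the concave square root, whereas the paper uses Cauchy--Schwarz applied to the same scalar random variable.
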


\begin{proof}
Recall the frequency-decoupled energy from the main text
\begin{equation}
\begin{aligned}
    \Phi_{t,\mathrm y}(\mathrm x)
    =
    \lambda_{\mathrm{lp},t}
    \left\|
        G_t\odot
        \mathcal F(\mathrm A\mathrm x-\mathrm y)
    \right\|_{2,\mathrm{avg}}^2
    +
    \lambda_{\mathrm{prior},t}
    \left\|
        M_t\odot\mathcal F(\mathrm x)
    \right\|_{1,\mathrm{avg}}.
    \label{eq:app_recalled_frequency_energy}
\end{aligned}
\end{equation}
By the notations in (\ref{eq:app_frequency_operators}),
\begin{equation}
\begin{aligned}
    G_t\odot\mathcal F(\mathrm A\mathrm x-\mathrm y)
    &=
    \operatorname{Diag}(G_t)
    \bigl(\mathcal F\mathrm A\mathrm x-\mathcal F\mathrm y\bigr)
    \\
    &=
    \operatorname{Diag}(G_t)\mathcal F\mathrm A\mathrm x
    -\operatorname{Diag}(G_t)\mathcal F\mathrm y
    \\
    &=\mathcal K_t\mathrm x-\mathrm b_t,
    \\
    M_t\odot\mathcal F(\mathrm x)
    &=
    \operatorname{Diag}(M_t)\mathcal F\mathrm x
    =\mathcal D_t\mathrm x.
\end{aligned}
\nonumber
\end{equation}
Substituting these identities into
\eqref{eq:app_recalled_frequency_energy} gives
\begin{equation}
    \Phi_{t,\mathrm y}(\mathrm x)
    =
    \frac{\lambda_{\mathrm{lp},t}}{N}
    \|\mathcal K_t\mathrm x-\mathrm b_t\|_2^2
    +
    \frac{\lambda_{\mathrm{prior},t}}{N}
    \|\mathcal D_t\mathrm x\|_1.
    \label{eq:app_operator_energy}
\end{equation}

Let $ \mathrm V:=\mathrm X_0-\mathrm m_t.$
Then we have
\begin{equation}
    \mathbb E_t[\mathrm V]
    =
    \mathbb E_t[\mathrm X_0]-\mathrm m_t
    =0,
    \qquad
    \mathbb E_t[\mathrm V\mathrm V^*]=\mathrm C_t.
    \label{eq:app_centered_moments}
\end{equation}
The finite conditional second moment and finite dimensionality imply that
all terms below are integrable.

We first prove the lower bound.  The mapping
\begin{equation}
    \mathrm x\longmapsto
    \|\mathcal K_t\mathrm x-\mathrm b_t\|_2^2
\end{equation}
is convex because it is the composition of a convex squared norm with an
affine mapping.  Similarly,
\begin{equation}
    \mathrm x\longmapsto\|\mathcal D_t\mathrm x\|_1
\end{equation}
is convex because it is the composition of a norm with a linear mapping.
Since both weights in~\eqref{eq:app_operator_energy} are nonnegative,
$\Phi_{t,\mathrm y}$ is convex.  Conditional Jensen's inequality now gives
\begin{equation}
\begin{aligned}
    \Phi_{t,\mathrm y}(\mathrm m_t)
    =
    \Phi_{t,\mathrm y}(\mathbb E_t[\mathrm X_0])
    \leq
    \mathbb E_t[\Phi_{t,\mathrm y}(\mathrm X_0)],
\end{aligned}
\label{eq:app_conditional_jensen}
\end{equation}
which is the first inequality in
\eqref{eq:app_frequency_energy_error_bound}.

We next evaluate the gap of the quadratic measurement term.  Set
\begin{equation}
    \mathrm a_t:=\mathcal K_t\mathrm m_t-\mathrm b_t.
\end{equation}
Since $\mathrm X_0=\mathrm m_t+\mathrm V$,
\begin{equation}
    \mathcal K_t\mathrm X_0-\mathrm b_t
    =
    \mathrm a_t+\mathcal K_t\mathrm V.
\end{equation}
Expanding the squared norm in the complex Euclidean inner product yields
\begin{equation}
\begin{aligned}
    \|\mathcal K_t\mathrm X_0-\mathrm b_t\|_2^2
    ={}&
    \|\mathrm a_t\|_2^2
    +2\operatorname{Re}
       \langle\mathrm a_t,\mathcal K_t\mathrm V\rangle
    +\|\mathcal K_t\mathrm V\|_2^2.
    \label{eq:app_quadratic_expansion}
\end{aligned}
\end{equation}
The cross term has zero conditional expectation because
\begin{equation}
\begin{aligned}
    \mathbb E_t
    [\langle\mathrm a_t,\mathcal K_t\mathrm V\rangle]
    &=
    \left\langle
        \mathrm a_t,
        \mathcal K_t\mathbb E_t[\mathrm V]
    \right\rangle
    =0,
\end{aligned}
\end{equation}
where we used linearity of $\mathcal K_t$ and
\eqref{eq:app_centered_moments}.  Taking conditional expectation in
\eqref{eq:app_quadratic_expansion} therefore gives
\begin{equation}
\begin{aligned}
\mathbb E_t\|\mathcal K_t\mathrm X_0-\mathrm b_t\|_2^2
-\|\mathcal K_t\mathrm m_t-\mathrm b_t\|_2^2
=
\mathbb E_t\|\mathcal K_t\mathrm V\|_2^2.
\end{aligned}
\label{eq:app_quadratic_gap_first}
\end{equation}
To express the last term through the covariance, use
\begin{equation}
\begin{aligned}
    \|\mathcal K_t\mathrm V\|_2^2
    =
    (\mathcal K_t\mathrm V)^*(\mathcal K_t\mathrm V)
    =
    \operatorname{tr}
    \left(
        \mathcal K_t\mathrm V\mathrm V^*\mathcal K_t^*
    \right).
\end{aligned}
\end{equation}
Taking conditional expectations, using linearity of the trace and
\eqref{eq:app_centered_moments}, we obtain
\begin{equation}
\begin{aligned}
    \mathbb E_t\|\mathcal K_t\mathrm V\|_2^2
    &=
    \operatorname{tr}
    \left(
        \mathcal K_t
        \mathbb E_t[\mathrm V\mathrm V^*]
        \mathcal K_t^*
    \right)
    \\
    &=
    \operatorname{tr}
    (\mathcal K_t\mathrm C_t\mathcal K_t^*).
\end{aligned}
\label{eq:app_quadratic_exact_gap}
\end{equation}

We finally estimate the spectral-prior gap.  Since
$\mathcal D_t\mathrm X_0
=\mathcal D_t\mathrm m_t+\mathcal D_t\mathrm V$, 
\begin{equation}
    \|\mathcal D_t\mathrm X_0\|_1
    \leq
    \|\mathcal D_t\mathrm m_t\|_1
    +\|\mathcal D_t\mathrm V\|_1.
\end{equation}
After subtracting $\|\mathcal D_t\mathrm m_t\|_1$ and taking conditional
expectations, this gives
\begin{equation}
\begin{aligned}
\mathbb E_t\|\mathcal D_t\mathrm X_0\|_1
-\|\mathcal D_t\mathrm m_t\|_1
\leq
\mathbb E_t\|\mathcal D_t\mathrm V\|_1.
\end{aligned}
\label{eq:app_l1_triangle_step}
\end{equation}
For any $\mathrm z\in\mathbb C^N$, the Cauchy--Schwarz inequality gives
\begin{equation}
    \|\mathrm z\|_1
    =\sum_{j=1}^N|z_j|
    \leq
    \left(\sum_{j=1}^N1^2\right)^{1/2}
    \left(\sum_{j=1}^N|z_j|^2\right)^{1/2}
    =\sqrt N\|\mathrm z\|_2.
\end{equation}
Applying this inequality to $\mathcal D_t\mathrm V$ and then applying
Cauchy--Schwarz to the scalar random variable
$\|\mathcal D_t\mathrm V\|_2$ yields
\begin{equation}
\begin{aligned}
    \mathbb E_t\|\mathcal D_t\mathrm V\|_1
    \leq
    \sqrt N\,\mathbb E_t\|\mathcal D_t\mathrm V\|_2
    \leq
    \sqrt N
    \left(
        \mathbb E_t\|\mathcal D_t\mathrm V\|_2^2
    \right)^{1/2}.
\end{aligned}
\label{eq:app_l1_to_l2_step}
\end{equation}
As in the quadratic calculation,
\begin{equation}
\begin{aligned}
    \mathbb E_t\|\mathcal D_t\mathrm V\|_2^2
    =
    \mathbb E_t
    \operatorname{tr}
    (\mathcal D_t\mathrm V\mathrm V^*\mathcal D_t^*)
    =
    \operatorname{tr}
    (\mathcal D_t\mathrm C_t\mathcal D_t^*).
\end{aligned}
\label{eq:app_prior_trace_identity}
\end{equation}
Combining~\eqref{eq:app_l1_triangle_step}--
\eqref{eq:app_prior_trace_identity}, we get
\begin{equation}
\begin{aligned}
\mathbb E_t\|\mathcal D_t\mathrm X_0\|_1
-\|\mathcal D_t\mathrm m_t\|_1
\leq
\sqrt N\,
\sqrt{\operatorname{tr}(\mathcal D_t\mathrm C_t\mathcal D_t^*)}.
\end{aligned}
\label{eq:app_l1_gap}
\end{equation}
Multiplying~\eqref{eq:app_quadratic_exact_gap} by
$\lambda_{\mathrm{lp},t}/N$, multiplying~\eqref{eq:app_l1_gap} by
$\lambda_{\mathrm{prior},t}/N$, and adding the resulting inequalities gives
\begin{equation}
\begin{aligned}
\mathbb E_t[\Phi_{t,\mathrm y}(\mathrm X_0)]
-\Phi_{t,\mathrm y}(\mathrm m_t)
\leq
\frac{\lambda_{\mathrm{lp},t}}{N}
\operatorname{tr}(\mathcal K_t\mathrm C_t\mathcal K_t^*)
+
\frac{\lambda_{\mathrm{prior},t}}{\sqrt N}
\sqrt{\operatorname{tr}(\mathcal D_t\mathrm C_t\mathcal D_t^*)},
\end{aligned}
\end{equation}
which is the required upper bound.
\end{proof}

\begin{corollary}[Additional error from an approximate Tweedie estimate]
\label{cor:app_denoiser_error}
Under the assumptions of Proposition~\ref{prop:app_mean_bound}, let
$\widehat{\mathrm x}_{0,t}=\mathrm m_t+\boldsymbol{\eta}_t$.  Then
\begin{equation}
\begin{aligned}
&\left|
\mathbb E_t[\Phi_{t,\mathrm y}(\mathrm X_0)]
-\Phi_{t,\mathrm y}(\widehat{\mathrm x}_{0,t})
\right|
\\
\leq &
\Delta_t^{\mathrm{cm}}
+\frac{\lambda_{\mathrm{lp},t}}{N}
\left(
2\|\mathcal K_t\mathrm m_t-\mathrm b_t\|_2
  \|\mathcal K_t\boldsymbol{\eta}_t\|_2
+\|\mathcal K_t\boldsymbol{\eta}_t\|_2^2
\right) 
+\frac{\lambda_{\mathrm{prior},t}}{N}
\|\mathcal D_t\boldsymbol{\eta}_t\|_1,
\end{aligned}
\label{eq:app_approx_tweedie_bound}
\end{equation}
where $\Delta_t^{\mathrm{cm}}$ denotes the right-hand side of
\eqref{eq:app_frequency_energy_error_bound}.
\end{corollary}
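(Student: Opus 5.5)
The plan is a triangle-inequality argument with $\Phi_{t,\mathrm y}(\mathrm m_t)$ as the intermediate term:
\[
\bigl|\mathbb E_t[\Phi_{t,\mathrm y}(\mathrm X_0)]-\Phi_{t,\mathrm y}(\widehat{\mathrm x}_{0,t})\bigr|
\leq
\bigl|\mathbb E_t[\Phi_{t,\mathrm y}(\mathrm X_0)]-\Phi_{t,\mathrm y}(\mathrm m_t)\bigr|
+\bigl|\Phi_{t,\mathrm y}(\mathrm m_t)-\Phi_{t,\mathrm y}(\widehat{\mathrm x}_{0,t})\bigr|.
\]
By Proposition~\ref{prop:app_mean_bound}, the first difference is nonnegative and bounded by its right-hand side. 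Hence the first absolute value is at most $\Delta_t^{\mathrm{cm}}$. It remains to bound the deterministic perturbation $|\Phi_{t,\mathrm y}(\mathrm m_t+\boldsymbol\eta_t)-\Phi_{t,\mathrm y}(\mathrm m_t)|$ term by term. For this I will use the operator form \eqref{eq:app_operator_energy}, $\Phi_{t,\mathrm y}(\mathrm x)=\frac{\lambda_{\mathrm{lp},t}}{N}\|\mathcal K_t\mathrm x-\mathrm b_t\|_2^2+\frac{\lambda_{\mathrm{prior},t}}{N}\|\mathcal D_t\mathrm x\|_1$. The weights are nonnegative, so the triangle inequality splits the perturbation into the two terms.

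For the quadratic term, set $\mathrm a_t:=\mathcal K_t\mathrm m_t-\mathrm b_t$. Linearity gives $\mathcal K_t\widehat{\mathrm x}_{0,t}-\mathrm b_t=\mathrm a_t+\mathcal K_t\boldsymbol\eta_t$. Expanding as in \eqref{eq:app_quadratic_expansion} yields the difference $2\operatorname{Re}\langle\mathrm a_t,\mathcal K_t\boldsymbol\eta_t\rangle+\|\mathcal K_t\boldsymbol\eta_t\|_2^2$. Unlike the proof of the proposition, $\boldsymbol\eta_t$ is deterministic given $\mathrm x_t$, so the cross term does not vanish. I will bound it instead by the Cauchy--Schwarz inequality, $|\operatorname{Re}\langle\mathrm a_t,\mathcal K_t\boldsymbol\eta_t\rangle|\leq\|\mathrm a_t\|_2\|\mathcal K_t\boldsymbol\eta_t\|_2$. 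Multiplying by $\lambda_{\mathrm{lp},t}/N$ then gives exactly the middle term of \eqref{eq:app_approx_tweedie_bound}.

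For the $\ell_1$ term, the reverse triangle inequality gives $\bigl|\|\mathcal D_t\mathrm m_t+\mathcal D_t\boldsymbol\eta_t\|_1-\|\mathcal D_t\mathrm m_t\|_1\bigr|\leq\|\mathcal D_t\boldsymbol\eta_t\|_1$. Multiplying by $\lambda_{\mathrm{prior},t}/N$ gives the last term, and summing the three bounds finishes the proof. There is no real obstacle. The only points to check are that the normalization $1/N$ in \eqref{eq:app_operator_energy} matches the averaged norms, and that $\mathbb E_t[\Phi_{t,\mathrm y}(\mathrm X_0)]$ is finite, which the finite-second-moment assumption inherited from the proposition guarantees.
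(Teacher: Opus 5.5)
Your proposal is correct and follows the paper's proof essentially step for step. Both route the estimate through the intermediate value $\Phi_{t,\mathrm y}(\mathrm m_t)$, invoke Proposition~\ref{prop:app_mean_bound} for the first term, expand with $\mathrm a_t$ and apply Cauchy--Schwarz to the cross term of the quadratic part, and use the reverse triangle inequality for the $\ell_1$ part, combining everything through the $1/N$-normalized operator form \eqref{eq:app_operator_energy}.
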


\begin{proof}
Adding and subtracting $\Phi_{t,\mathrm y}(\mathrm m_t)$ and applying the
triangle inequality gives
\begin{equation}
\begin{aligned}
&\left|
\mathbb E_t[\Phi_{t,\mathrm y}(\mathrm X_0)]
-\Phi_{t,\mathrm y}(\widehat{\mathrm x}_{0,t})
\right|
\\
\leq &
\left|
\mathbb E_t[\Phi_{t,\mathrm y}(\mathrm X_0)]
-\Phi_{t,\mathrm y}(\mathrm m_t)
\right|
+
\left|
\Phi_{t,\mathrm y}(\mathrm m_t)
-\Phi_{t,\mathrm y}(\mathrm m_t+\boldsymbol{\eta}_t)
\right|.
\label{eq:app_denoiser_triangle}
\end{aligned}
\end{equation}
The first term is nonnegative and bounded by
$\Delta_t^{\mathrm{cm}}$ according to Proposition
\ref{prop:app_mean_bound}.

For the quadratic part of the second term, set
$\mathrm a_t:=\mathcal K_t\mathrm m_t-\mathrm b_t$.  Then
\begin{equation}
\begin{aligned}
&\left|
\|\mathcal K_t(\mathrm m_t+\boldsymbol{\eta}_t)-\mathrm b_t\|_2^2
-\|\mathcal K_t\mathrm m_t-\mathrm b_t\|_2^2
\right|
\\
&\quad=
\left|
2\operatorname{Re}
\langle\mathrm a_t,\mathcal K_t\boldsymbol{\eta}_t\rangle
+\|\mathcal K_t\boldsymbol{\eta}_t\|_2^2
\right|
\\
&\quad\leq
2\|\mathrm a_t\|_2\|\mathcal K_t\boldsymbol{\eta}_t\|_2
+\|\mathcal K_t\boldsymbol{\eta}_t\|_2^2,
\end{aligned}
\label{eq:app_denoiser_quadratic_error}
\end{equation}
where the last step uses
$|\operatorname{Re}\langle\mathrm u,\mathrm v\rangle|
\leq\|\mathrm u\|_2\|\mathrm v\|_2$.
For the spectral-prior term, the reverse triangle inequality gives
\begin{equation}
\begin{aligned}
&\left|
\|\mathcal D_t(\mathrm m_t+\boldsymbol{\eta}_t)\|_1
-\|\mathcal D_t\mathrm m_t\|_1
\right|
\leq
\|\mathcal D_t\boldsymbol{\eta}_t\|_1.
\end{aligned}
\label{eq:app_denoiser_prior_error}
\end{equation}
Substituting~\eqref{eq:app_denoiser_quadratic_error} and
\eqref{eq:app_denoiser_prior_error} into
\eqref{eq:app_denoiser_triangle}, together with the normalized weights in
\eqref{eq:app_operator_energy}, proves~\eqref{eq:app_approx_tweedie_bound}.
\end{proof}

For the variance-preserving Gaussian diffusion model considered here,
$\mathrm X_t=\sqrt{\bar\alpha_t}\mathrm X_0+
\sqrt{1-\bar\alpha_t}\,\boldsymbol\epsilon$, Tweedie's formula gives:
\begin{equation}
    \mathrm m_t
    =\mathbb E[\mathrm X_0\mid\mathrm X_t=\mathrm x_t]
    =
    \frac{
    \mathrm x_t+(1-\bar\alpha_t)
    \nabla_{\mathrm x_t}\log p_t(\mathrm x_t)}
    {\sqrt{\bar\alpha_t}}.
    \label{eq:app_tweedie_identity}
\end{equation}
Replacing the exact score by a learned score produces
$\widehat{\mathrm x}_{0,t}$; Corollary~\ref{cor:app_denoiser_error} quantifies the additional
energy error associated with the discrepancy
$\boldsymbol{\eta}_t
=
\widehat{\mathrm x}_{0,t}-\mathrm m_t$.

We conclude by distinguishing the energy mean from the local log-partition
quantity that enters the exact score correction.  Define the conditional
free energy
\begin{equation}
    \mathcal A_t^{\mathrm{loc}}(\mathrm x_t;\mathrm y)
    :=
    -\log Z_t^{\mathrm{loc}}(\mathrm x_t;\mathrm y)
    =
    -\log\mathbb E_t
    \left[
        e^{-\Phi_{t,\mathrm y}(\mathrm X_0)}
    \right].
    \label{eq:app_conditional_free_energy}
\end{equation}
Since $r\mapsto e^{-r}$ is convex, Jensen's inequality gives
\begin{equation}
    e^{-\mathbb E_t[\Phi_{t,\mathrm y}(\mathrm X_0)]}
    \leq
    \mathbb E_t[e^{-\Phi_{t,\mathrm y}(\mathrm X_0)}].
\end{equation}
Because $-\log$ is decreasing, applying it to both sides yields
\begin{equation}
    \mathcal A_t^{\mathrm{loc}}(\mathrm x_t;\mathrm y)
    \leq
    \mathbb E_t[\Phi_{t,\mathrm y}(\mathrm X_0)]
    \leq
    \Phi_{t,\mathrm y}(\mathrm m_t)+\Delta_t^{\mathrm{cm}},
    \label{eq:app_log_partition_jensen}
\end{equation}
where the second inequality follows from Proposition
\ref{prop:app_mean_bound}.

A direct concentration condition makes the plug-in interpretation precise.
For example, suppose that, conditionally on
$\mathrm X_t=\mathrm x_t$,
\begin{equation}
    \left|
        \Phi_{t,\mathrm y}(\mathrm X_0)
        -\Phi_{t,\mathrm y}(\mathrm m_t)
    \right|
    \leq\delta_t
    \qquad\text{almost surely}.
    \label{eq:app_energy_concentration}
\end{equation}
Exponentiating the two inequalities implied by
\eqref{eq:app_energy_concentration} and then taking conditional expectations
gives
\begin{equation}
\begin{aligned}
    e^{-\Phi_{t,\mathrm y}(\mathrm m_t)-\delta_t}
    &\leq
    Z_t^{\mathrm{loc}}(\mathrm x_t;\mathrm y)
    \leq
    e^{-\Phi_{t,\mathrm y}(\mathrm m_t)+\delta_t}.
\end{aligned}
\end{equation}
Applying $-\log$ therefore yields the two-sided estimate
\begin{equation}
    \left|
        \mathcal A_t^{\mathrm{loc}}(\mathrm x_t;\mathrm y)
        -\Phi_{t,\mathrm y}(\mathrm m_t)
    \right|
    \leq\delta_t.
    \label{eq:app_free_energy_concentration_bound}
\end{equation}
Thus replacing the conditional log-partition function by the energy at the
conditional mean is justified when the conditional energy is concentrated.
If $\widehat{\mathrm x}_{0,t}=\mathrm m_t+\boldsymbol{\eta}_t$, combining
\eqref{eq:app_free_energy_concentration_bound} with
\eqref{eq:app_denoiser_quadratic_error} and
\eqref{eq:app_denoiser_prior_error} gives
\begin{equation}
\begin{aligned}
&\left|
\mathcal A_t^{\mathrm{loc}}(\mathrm x_t;\mathrm y)
-\Phi_{t,\mathrm y}(\widehat{\mathrm x}_{0,t})
\right|
\\
\leq &
\delta_t
+\frac{\lambda_{\mathrm{lp},t}}{N}
\left(
2\|\mathcal K_t\mathrm m_t-\mathrm b_t\|_2
  \|\mathcal K_t\boldsymbol{\eta}_t\|_2
+\|\mathcal K_t\boldsymbol{\eta}_t\|_2^2
\right) \\
&
+\frac{\lambda_{\mathrm{prior},t}}{N}
\|\mathcal D_t\boldsymbol{\eta}_t\|_1.
\end{aligned}
\nonumber
\end{equation}
This bound separates the conditional-concentration error $\delta_t$ from
the error of the learned Tweedie estimate.  Accordingly,
\begin{equation}
    Z_t^{\mathrm{loc}}(\mathrm x_t;\mathrm y)
\approx
e^{-\Phi_{t,\mathrm y}(\widehat{\mathrm x}_{0,t})}.
\end{equation}
is a concentration-based plug-in approximation, rather than an exact
consequence of Jensen's inequality.

\section{Experimental Details}
\label{app:experimental_details}

\subsection{Datasets and Evaluation Protocol}

We evaluate on Flickr-Faces-HQ (FFHQ)~\citep{karras2019style} and the
ILSVRC2012 validation set~\citep{deng2009imagenet,russakovsky2015imagenet}
at $256\times256$ resolution. For each dataset--task setting, we use a
fixed set of $1{,}000$ evaluation images. Within each task, all methods
receive the same degraded measurement of each image. Images are represented
in RGB. FFHQ images are resized to $256\times256$ when necessary; ImageNet
images are bicubically resized to $256\times256$ without cropping.

For FFHQ, we use the pretrained unconditional \texttt{ffhq\_10m.pt}
checkpoint distributed with the official DPS release~\citep{chung2023dps}.
For ImageNet, we use the pretrained $256\times256$ class-conditional
diffusion checkpoint released with Guided Diffusion~\citep{dhariwal2021diffusion},
conditioning on the ground-truth class labels. The backbone is fixed
across methods within each dataset, without task-specific fine-tuning.

\paragraph{External assets and licenses.}
The FFHQ dataset resources are distributed under the CC BY-NC-SA 4.0
license, while individual images retain their original Flickr licenses.
ImageNet is used under its documented access terms for non-commercial
research and educational purposes.

\paragraph{Sensitive content.}
FFHQ contains human face images and may therefore involve privacy-sensitive
visual content. We use FFHQ solely as an image-restoration benchmark and do
not perform identity recognition, demographic inference, or analysis of
personal attributes. We do not collect or curate any new human-subject data.

We compute PSNR and SSIM for each reconstructed image against its
reference in RGB $[0,1]$ and average the scores over the evaluation
set. LPIPS is computed per image using the AlexNet backbone after
mapping the reconstruction and reference to $[-1,1]$, and is averaged
over the same set. FID is computed between the complete reconstructed
and reference image sets; its implementation and the perceptual results
are reported in Supplementary Sec.~\ref{app:perceptual_metrics}.

\subsection{Degradation Operators}

We study Gaussian, motion, and uniform non-blind deblurring. Unless
otherwise stated, the measurement follows Eq.~\eqref{eq:model} with
additive Gaussian noise of standard deviation
$\sigma_{\mathrm{y}}=0.05$. Each task uses the same realized
measurements for all compared methods.

\paragraph{Gaussian deblurring.}
The blur kernel has size $61\times61$ and standard deviation $3.0$.
The FFHQ forward operator uses zero padding, whereas the ImageNet
operator uses reflection padding.

\paragraph{Motion deblurring.}
We use a $61\times61$ random-trajectory kernel generated once with
intensity $0.5$ and kernel seed $0$. It is fixed across the evaluated
images and applied using zero-boundary convolution.

\paragraph{Uniform deblurring.}
The underlying kernel is a $9\times9$ uniform averaging kernel. The
implemented forward operator uses a separable SVD construction with
zero-boundary convolution. Specifically, let $\mathrm{B}$ be the
one-dimensional zero-boundary convolution matrix for the nine-point
averaging filter, and write
$\mathrm{B}=\mathrm{U}\operatorname{diag}(s_i)\mathrm{V}^{\top}$.
Set
$\mathrm{B}_{0.03}=\mathrm{U}\operatorname{diag}
\bigl(s_i\mathbf{1}_{\{s_i\geq0.03\}}\bigr)\mathrm{V}^{\top}$
and apply the separable operator
$\mathrm{A}(\mathrm{X})=\mathrm{B}_{0.03}\mathrm{X}
\mathrm{B}_{0.03}^{\top}$ independently to each color channel.
Thus, singular values strictly below $0.03$ are discarded in each
one-dimensional factor before forming the two-dimensional operator.
This truncated operator is used both to generate measurements and
to evaluate measurement residuals. The attenuation weight
$W_{\kappa}$ in Eq.~\eqref{eq:kernel_attenuation_weight} is instead
computed from the Fourier response of the underlying $9\times9$
kernel. Thus, it serves as a kernel-based spectral proxy and is not
the singular-value spectrum of the implemented forward operator.

\subsection{Method Configurations}

Both reconstruction branches use $T=1{,}000$ reverse steps; the DDIM
branch is deterministic ($\eta=0$). We set $\rho=10^{-3}$ in
$W_{\kappa}$, using the same spatial-frequency weight for all three
color channels. For $t=T,T-1,\ldots,1$, define the reverse progress
$p_t=(T-t)/(T-1)$. The reported temporal coefficients are
\begin{equation}
    \lambda_{j,t}
    =
    \lambda_j\operatorname{sigm}
    \bigl(k_j(p_t-p_j)\bigr),
    \qquad
    j\in\{\mathrm{lp},\mathrm{prior},\mathrm{reg}\},
    \label{eq:app_temporal_term_gates}
\end{equation}
where $\operatorname{sigm}(u)=(1+e^{-u})^{-1}$. We use the same
activation centers and slopes for all tasks:
\begin{equation}
\begin{aligned}
    (p_{\mathrm{lp}},k_{\mathrm{lp}})&=(-0.2,10),\\
    (p_{\mathrm{prior}},k_{\mathrm{prior}})&=(0.2,10),\\
    (p_{\mathrm{reg}},k_{\mathrm{reg}})&=(0.55,12).
\end{aligned}
\end{equation}
The active-band cutoff and normalized-guidance step size follow
\begin{align}
    \tau_t
    &=
    \tau_{\mathrm{end}}
    -(\tau_{\mathrm{end}}-\tau_{\mathrm{start}})e^{-5p_t},
    \label{eq:app_exponential_cutoff_schedule}\\
    s_t
    &=
    s_{\mathrm{early}}
    +\frac{s_{\mathrm{late}}-s_{\mathrm{early}}}{2}
    \bigl[1-\cos(\pi p_t)\bigr].
    \label{eq:app_guidance_step_schedule}
\end{align}
The cutoff therefore starts at $\tau_{\mathrm{start}}$ and approaches,
but does not generally reach, $\tau_{\mathrm{end}}$ at the final step.
Cutoffs are radii in frequency-bin units on the centered
$256\times256$ DFT grid.
Gradient normalization is applied independently to each sample over
channels and spatial locations with safeguard $\varepsilon=10^{-8}$.
All reported configurations use the clean-image spectral $\ell_1$
prior; the motion-specific late Fourier auxiliary is disabled.
Table~\ref{tab:method_configurations} gives the task-specific
parameters and output-fusion weights.

\begin{table*}[t]
\centering
\caption{Task-specific configurations. A combined DDPM/DDIM row means
that both branches use the listed guidance parameters. For uniform
deblurring, CGJ is enabled only in the CGJ-DDPM branch. The weight
$w$ applies to the DDPM or CGJ-DDPM output; the DDIM output receives
$1-w$. It is repeated on separate branch rows for clarity.}
\label{tab:method_configurations}
\small
\setlength{\tabcolsep}{4.0pt}
\renewcommand{\arraystretch}{1.08}
\begin{tabular}{llccccccc}
\toprule
Dataset & Task and branch
& $\tau_{\mathrm{start}}$
& $\tau_{\mathrm{end}}$
& $\lambda_{\mathrm{lp}}$
& $\lambda_{\mathrm{prior}}$
& $\lambda_{\mathrm{reg}}$
& $(s_{\mathrm{early}},s_{\mathrm{late}})$
& $w$ \\
\midrule
FFHQ & Gaussian DDPM/DDIM
& 10 & 75 & 1.30 & 0.0030 & 0.0050 & $(2.5,0.60)$ & 0.50 \\
FFHQ & Motion DDPM/DDIM
& 8 & 125 & 2.20 & 0.0050 & 0.0050 & $(2.5,0.60)$ & 0.50 \\
FFHQ & Uniform CGJ-DDPM
& 8 & 115 & 1.20 & 0.0005 & 0.0030 & $(2.5,0.60)$ & 0.50 \\
FFHQ & Uniform DDIM
& 8 & 115 & 1.20 & 0.0005 & 0.0030 & $(2.5,0.60)$ & 0.50 \\
\midrule
ImageNet & Gaussian DDPM/DDIM
& 10 & 75 & 1.00 & 0.0010 & 0.0030 & $(2.0,0.50)$ & 0.70 \\
ImageNet & Motion DDPM
& 8 & 105 & 2.20 & 0.00125 & 0.0050 & $(2.5,0.60)$ & 0.50 \\
ImageNet & Motion DDIM
& 8 & 115 & 2.20 & 0.000375 & 0.0030 & $(2.5,0.60)$ & 0.50 \\
ImageNet & Uniform CGJ-DDPM
& 8 & 105 & 1.50 & 0.0003 & 0.0020 & $(2.7,0.65)$ & 0.55 \\
ImageNet & Uniform DDIM
& 8 & 100 & 1.35 & 0.0003 & 0.0025 & $(2.6,0.60)$ & 0.55 \\
\bottomrule
\end{tabular}
\end{table*}

The two branches are run to completion independently and their final saved 8-bit RGB images are fused using the fixed task-specific output-space weights reported in Table~\ref{tab:method_configurations}.
Consequently, the reported \emph{Ours} result uses two standard
DDPM/DDIM branches for Gaussian and motion deblurring, and a
CGJ-DDPM/DDIM pair for uniform deblurring. The output weight $w$ is
distinct from the CGJ gradient-mixture weight $\beta_{\mathrm{CG}}$.

\paragraph{Anisotropic TV.}
For a multi-channel image $\mathrm{u}$, the trajectory regularizer
uses
\begin{equation}
    \operatorname{TV}_{\mathrm{ani}}(\mathrm{u})
    =
    \operatorname{mean}\bigl(|D_{\mathrm{h}}\mathrm{u}|\bigr)
    +
    \operatorname{mean}\bigl(|D_{\mathrm{v}}\mathrm{u}|\bigr),
    \label{eq:anisotropic_tv_impl}
\end{equation}
where $D_{\mathrm{h}}$ and $D_{\mathrm{v}}$ are horizontal and vertical
first-order finite differences, and each mean is taken over the
corresponding spatial differences and color channels. In
Eq.~\eqref{eq:trajectory_regularization}, this is applied to
$\mathrm{x}_t-\operatorname{sg}(\widehat{\mathrm{x}}_{0,t})$;
the clean-image anchor is detached for this term.

\paragraph{CGJ correction.}
CGJ is applied only to the DDPM branch of uniform deblurring on both
datasets. We set $\beta_{\mathrm{CG}}=0.3$ and
$\varepsilon_{\mathrm{CG}}=10^{-12}$. The correction is activated at
$p_t\geq0.25$ and applied once every five reverse steps, with five
conjugate-gradient iterations per activation. These settings are
shared between FFHQ and ImageNet.

\subsection{Baseline Configurations}

We use public implementations of the five baselines with the same
dataset-specific diffusion backbone, evaluation images, and degraded
measurements as above. Their reverse-step counts and guidance settings
are given separately because they differ across methods and tasks.

\paragraph{DPS.}
We run DPS~\citep{chung2023dps} for $1{,}000$ DDPM reverse steps with
guidance scale $0.3$. We use the released Gaussian and motion task
configurations for FFHQ and ImageNet. For uniform deblurring, we
substitute the truncated operator above while keeping the corresponding
backbone and sampling protocol.

\paragraph{DSG.}
We run DSG~\citep{yang2023dsg} for $1{,}000$ reverse steps. On ImageNet,
we use the matched class-conditional backbone, retaining the guidance
rule and frozen hyperparameters. For uniform deblurring, we replace
the forward operator by the truncated uniform operator but use the
corresponding frozen Gaussian-deblurring hyperparameters without
uniform-specific tuning.

\paragraph{FGPS.}
We run FGPS~\citep{thaker2025fgps} for $1{,}000$ reverse steps. On FFHQ,
we use one frozen configuration across the three tasks: scale $0.4$,
frequency bounds $(10,75)$, and a guidance scale decreasing linearly
from $3.0$ to $0.6$. For ImageNet uniform deblurring, we apply the
frozen ImageNet Gaussian configuration with the uniform operator,
without additional tuning.

\paragraph{DPS-MAP.}
We run the released DPS-MAP implementation~\citep{li2025efficient}
with guidance scale $1.0$ and $20$ conjugate-gradient iterations for
its measurement-space correction. The FFHQ runs use $1{,}000$
respaced DDIM steps and the ImageNet runs use $100$ respaced DDIM
steps. The measurement operators and noise levels match those used
for the other methods.

\paragraph{DMPS.}
We run the public DMPS implementation~\citep{meng2025dmps} with
guidance scale $1.0$. All FFHQ tasks and ImageNet Gaussian deblurring use $100$ respaced reverse steps; ImageNet motion and uniform deblurring use
$1{,}000$ reverse steps. We use the corresponding measurement
operators described above. On ImageNet, the class-conditional model
receives the ground-truth class labels.

\section{Additional Experimental Results}
\subsection{Complementary Perceptual Metrics}
\label{app:perceptual_metrics}
Because our primary objective in non-blind image
inverse problems is faithful recovery of the specific image associated with each measurement, we first report PSNR and SSIM in the main text. The complementary LPIPS and FID results are reported in  Table~\ref{tab:perceptual_metrics}. 
The results reveal a clear separation between paired fidelity and perceptual
quality. Although DPS-MAP and DMPS are among the strongest baselines in PSNR
and SSIM, their LPIPS and FID values are frequently worse than those of FGPS
and DSG, particularly on FFHQ. The main exception is ImageNet motion
deblurring, where DPS-MAP achieves the best LPIPS and the second-best FID;
DMPS also obtains a competitive LPIPS in this setting, although its FID
remains high. Overall, FGPS achieves the best FID in all six settings and the
best LPIPS in five of them, while DSG is generally the next strongest
perceptual baseline. Our method does not dominate these perceptual metrics:
its results often lie between the perception-oriented baselines and the
strong distortion-oriented solvers, consistent with its emphasis on
measurement-faithful recovery. These observations motivate reporting LPIPS
and FID alongside, rather than in place of, the paired PSNR and SSIM results.

\begin{table*}[t]
\centering
\caption{
Complementary perceptual comparison on FFHQ and ImageNet under three
deblurring settings. Lower LPIPS and FID are better. Bold and underlined
values indicate the best and second-best results, respectively.
}
\label{tab:perceptual_metrics}

\scriptsize
\setlength{\tabcolsep}{4.0pt}
\renewcommand{\arraystretch}{1.08}

\begin{tabular}{lcccccc}
\toprule
& \multicolumn{2}{c}{Gaussian Blur}
& \multicolumn{2}{c}{Motion Blur}
& \multicolumn{2}{c}{Uniform Blur} \\
\cmidrule(lr){2-3}\cmidrule(lr){4-5}\cmidrule(lr){6-7}
Method
& LPIPS$\downarrow$ & FID$\downarrow$
& LPIPS$\downarrow$ & FID$\downarrow$
& LPIPS$\downarrow$ & FID$\downarrow$ \\
\midrule
\multicolumn{7}{c}{\textbf{FFHQ Dataset}} \\
\midrule
DPS~\citep{chung2023dps}
& 0.150 & 31.161
& 0.197 & 34.278
& 0.175 & 32.164 \\
DSG~\citep{yang2023dsg}
& \underline{0.104} & \underline{26.155}
& \underline{0.130} & \underline{28.557}
& \underline{0.134} & \underline{28.609} \\
FGPS~\citep{thaker2025fgps}
& \textbf{0.093} & \textbf{23.593}
& \textbf{0.103} & \textbf{24.670}
& \textbf{0.116} & \textbf{26.136} \\
DPS-MAP~\citep{li2025efficient}
& 0.236 & 61.684
& 0.195 & 67.177
& 0.241 & 69.985 \\
DMPS~\citep{meng2025dmps}
& 0.220 & 62.453
& 0.188 & 69.320
& 0.231 & 71.458 \\
Ours
& 0.194 & 49.707
& 0.153 & 45.754
& 0.176 & 50.993 \\
\midrule
\multicolumn{7}{c}{\textbf{ImageNet Dataset}} \\
\midrule
DPS
& 0.336 & 48.369
& 0.380 & 54.001
& 0.330 & 47.078 \\
DSG
& \underline{0.285} & \underline{41.534}
& 0.337 & 51.545
& \underline{0.319} & \underline{45.410} \\
FGPS
& \textbf{0.210} & \textbf{33.502}
& \underline{0.255} & \textbf{38.079}
& \textbf{0.270} & \textbf{40.035} \\
DPS-MAP
& 0.378 & 66.920
& \textbf{0.247} & \underline{46.993}
& 0.336 & 62.013 \\
DMPS
& 0.376 & 59.277
& 0.287 & 74.868
& 0.354 & 80.468 \\
Ours
& 0.394 & 63.909
& 0.291 & 56.234
& 0.332 & 56.722 \\
\bottomrule
\end{tabular}
\end{table*}

\subsection{Additional Ablation Results}
\label{app:additional_ablation}

\paragraph{Complete Core-Component Ablation}

\begin{table}[t]
\centering
\caption{
Complete component ablation on the fixed 50-image FFHQ motion-deblurring
subset using a single DDPM branch.
$\Delta$PSNR is measured relative to the full configuration.
}
\label{tab:complete_component_ablation}
\small
\setlength{\tabcolsep}{4pt}
\renewcommand{\arraystretch}{1.08}
\begin{tabular}{lrrrr}
\toprule
Variant
& PSNR$\uparrow$
& $\Delta$PSNR
& SSIM$\uparrow$
& LPIPS$\downarrow$ \\
\midrule
Full
& 29.133 & -- & 0.8368 & 0.1322 \\
w/o attenuation weighting
& 29.138 & +0.005 & 0.8367 & 0.1318 \\
Fixed $G_t$
& 29.133 & +0.000 & 0.8368 & 0.1325 \\
w/o spectral prior
& 29.084 & -0.049 & 0.8366 & 0.1342 \\
w/o trajectory regularization
& 24.446 & -4.687 & 0.5659 & 0.3243 \\
w/o frequency decoupling
& 29.084 & -0.049 & 0.8365 & 0.1343 \\
\bottomrule
\end{tabular}
\end{table}

The complete ablation results are reported in Table~\ref{tab:complete_component_ablation}.
Removing the spectral prior or the complete frequency-decoupling mechanism
reduces PSNR by about $0.05$ dB, while removing the trajectory regularizer causes a substantially larger degradation. By contrast, replacing $W_{\kappa}$ with uniform weighting or fixing $G_t$ produces only minor changes on this setting.

\paragraph{Prior Unification and Motion Simplification}

Earlier motion configurations used a correction-space prior together
with a late auxiliary correction. In the final formulation, we use the
same clean-image spectral $\ell_1$ prior as in the other deblurring
settings and remove the auxiliary correction.

\begin{table}[t]
\centering
\caption{
Effect of the unified spectral prior and simplified motion configuration
on the full 1000-image evaluation sets.
}
\label{tab:prior_unification}
\small
\setlength{\tabcolsep}{4pt}
\renewcommand{\arraystretch}{1.08}
\begin{tabular}{llccc}
\toprule
Dataset & Configuration
& PSNR$\uparrow$
& SSIM$\uparrow$
& LPIPS$\downarrow$ \\
\midrule
FFHQ
& Previous
& 29.404 & 0.8380 & 0.1569 \\
FFHQ
& Unified
& 29.447 & 0.8392 & 0.1534 \\
\midrule
ImageNet
& Previous
& 26.382 & 0.7606 & 0.2948 \\
ImageNet
& Unified
& 26.415 & 0.7627 & 0.2913 \\
\bottomrule
\end{tabular}
\end{table}

As shown in Table~\ref{tab:prior_unification}, the unified configuration improves all three metrics on both datasets. The PSNR gain is $0.044$ dB on FFHQ and $0.032$ dB on ImageNet, while also improving SSIM and LPIPS. These results support using a common clean-image spectral prior across the three blur operators.

\paragraph{Operator-Specific CGJ Correction}

\begin{table}[t]
\centering
\caption{
Effect of the operator-specific CGJ correction on FFHQ uniform
deblurring over the full 1000-image evaluation set.
}
\label{tab:cgj_ablation}
\small
\setlength{\tabcolsep}{5pt}
\renewcommand{\arraystretch}{1.08}
\begin{tabular}{lccc}
\toprule
Variant
& PSNR$\uparrow$
& SSIM$\uparrow$
& LPIPS$\downarrow$ \\
\midrule
Base two-branch
& 29.108 & 0.8252 & 0.1836 \\
CGJ two-branch
& 29.160 & 0.8268 & 0.1757 \\
\bottomrule
\end{tabular}
\end{table}

As shown in Table~\ref{tab:cgj_ablation}, the CGJ two-branch variant increases PSNR from $29.108$ to $29.160$ dB and improves both SSIM and LPIPS. The PSNR improvement is $0.052$ dB, with higher PSNR on 919 of the 1000 paired reconstructions.

\subsection{Analysis of Two-Branch Inference}
\label{app:consensus}

Our final reconstruction combines two independently completed sampling trajectories by a fixed output-space weighted fusion. For Gaussian and motion deblurring, the two branches are DDPM and DDIM. For uniform deblurring, the DDPM branch is replaced by the CGJ-DDPM branch. The fusion weights are fixed for each task and are applied to the final reconstructed images rather than during the reverse diffusion process.

\begin{table*}[!t]
\centering
\caption{
Single-branch and two-branch reconstruction results on the full
1000-image evaluation sets. Each entry reports PSNR / SSIM / LPIPS. The two fusion weights correspond to the first and second branch, respectively. For uniform deblurring, the Ours-DDPM column corresponds to the CGJ-DDPM branch.
}
\label{tab:two_track_analysis}
\small
\setlength{\tabcolsep}{4.5pt}
\renewcommand{\arraystretch}{1.08}
\resizebox{\textwidth}{!}{%
\begin{tabular}{llcccc}
\toprule
Dataset & Task
& Ours-DDPM
& Ours-DDIM
& Ours
& Fusion weights \\
\midrule

FFHQ & Gaussian
& 29.095 / 0.8308 / 0.1741
& 29.236 / 0.8340 / 0.2127
& 29.528 / 0.8423 / 0.1943
& 0.50 / 0.50 \\

FFHQ & Motion
& 28.954 / 0.8319 / 0.1378
& 28.799 / 0.8168 / 0.1893
& 29.447 / 0.8392 / 0.1534
& 0.50 / 0.50 \\

FFHQ & Uniform
& 28.775 / 0.8187 / 0.1621
& 28.216 / 0.8025 / 0.2011
& 29.160 / 0.8268 / 0.1757
& 0.50 / 0.50 \\

\midrule

ImageNet & Gaussian
& 25.463 / 0.7234 / 0.3867
& 25.333 / 0.7148 / 0.4103
& 25.500 / 0.7249 / 0.3936
& 0.70 / 0.30 \\

ImageNet & Motion
& 26.256 / 0.7570 / 0.2889
& 26.133 / 0.7509 / 0.2995
& 26.415 / 0.7627 / 0.2913
& 0.50 / 0.50 \\

ImageNet & Uniform
& 25.593 / 0.7165 / 0.3204
& 25.193 / 0.6959 / 0.3972
& 25.869 / 0.7240 / 0.3324
& 0.55 / 0.45 \\

\bottomrule
\end{tabular}
}
\end{table*}

As shown in Table~\ref{tab:two_track_analysis}, the two-branch
reconstruction improves both PSNR and SSIM over either constituent branch in all six settings. Relative to the better single branch, the PSNR gains are $0.292$, $0.493$, and $0.385$ dB on FFHQ Gaussian, motion, and uniform deblurring, and $0.037$, $0.159$, and $0.276$ dB on the corresponding ImageNet settings. The fused reconstruction also improves LPIPS relative to the DDIM branch, while the DDPM or CGJ-DDPM branch retains the lower LPIPS.

\section{{Paired Statistical Analysis}}
\label{app:statistics}

To complement the aggregate results in
Table~\ref{tab:quantitative_comparison}, we perform paired statistical comparisons on the $1{,}000$ matched reconstructions within each evaluation setting. We compare our final reconstruction with DPS-MAP and DMPS, which provide the strongest reconstruction-fidelity baselines among the compared methods. For each metric, we compute the per-image difference and report its mean together with a Student-$t$ 95\% confidence interval. We define $\Delta=\mathrm{Ours}-\mathrm{Baseline}$; therefore, positive values favor our method for PSNR and SSIM, whereas negative values favor our method for LPIPS.

\begin{table*}[!t]
\centering
\caption{
Paired statistical comparison with DPS-MAP and DMPS on the full
1000-image evaluation sets.
Each entry reports the mean per-image difference and its Student-$t$
95\% confidence interval.
We define $\Delta=\mathrm{Ours}-\mathrm{Baseline}$; positive values
favor our method for PSNR and SSIM, whereas negative values favor our
method for LPIPS.
}
\label{tab:paired_statistics}

\small
\setlength{\tabcolsep}{4.0pt}
\renewcommand{\arraystretch}{1.08}

\resizebox{\textwidth}{!}{%
\begin{tabular}{lllccc}
\toprule
Dataset & Task & Baseline
& $\Delta$PSNR
& $\Delta$SSIM
& $\Delta$LPIPS \\
\midrule

FFHQ & Gaussian & DPS-MAP
& $+0.253\,[+0.240,+0.266]$
& $+0.0058\,[+0.0055,+0.0062]$
& $-0.0418\,[-0.0428,-0.0407]$ \\

FFHQ & Gaussian & DMPS
& $+0.149\,[+0.133,+0.165]$
& $+0.0030\,[+0.0027,+0.0033]$
& $-0.0253\,[-0.0264,-0.0241]$ \\

FFHQ & Motion & DPS-MAP
& $+0.182\,[+0.169,+0.196]$
& $+0.0051\,[+0.0046,+0.0057]$
& $-0.0418\,[-0.0430,-0.0407]$ \\

FFHQ & Motion & DMPS
& $+0.221\,[+0.199,+0.243]$
& $+0.0043\,[+0.0037,+0.0048]$
& $-0.0350\,[-0.0361,-0.0338]$ \\

FFHQ & Uniform & DPS-MAP
& $+0.302\,[+0.286,+0.318]$
& $+0.0101\,[+0.0096,+0.0106]$
& $-0.0652\,[-0.0666,-0.0637]$ \\

FFHQ & Uniform & DMPS
& $+0.224\,[+0.209,+0.239]$
& $+0.0079\,[+0.0074,+0.0083]$
& $-0.0551\,[-0.0564,-0.0537]$ \\

\midrule

ImageNet & Gaussian & DPS-MAP
& $+0.069\,[+0.046,+0.092]$
& $+0.0027\,[+0.0014,+0.0040]$
& $+0.0156\,[+0.0139,+0.0174]$ \\

ImageNet & Gaussian & DMPS
& $+0.306\,[+0.283,+0.330]$
& $+0.0141\,[+0.0132,+0.0149]$
& $+0.0174\,[+0.0156,+0.0192]$ \\

ImageNet & Motion & DPS-MAP
& $-0.243\,[-0.266,-0.220]$
& $-0.0052\,[-0.0066,-0.0037]$
& $+0.0445\,[+0.0424,+0.0465]$ \\

ImageNet & Motion & DMPS
& $+0.090\,[+0.069,+0.112]$
& $+0.0118\,[+0.0103,+0.0133]$
& $+0.0046\,[+0.0020,+0.0071]$ \\

ImageNet & Uniform & DPS-MAP
& $-0.016\,[-0.042,+0.011]$
& $+0.0018\,[+0.0006,+0.0030]$
& $-0.0039\,[-0.0060,-0.0017]$ \\

ImageNet & Uniform & DMPS
& $+0.277\,[+0.252,+0.303]$
& $+0.0149\,[+0.0134,+0.0163]$
& $-0.0211\,[-0.0240,-0.0181]$ \\

\bottomrule
\end{tabular}
}
\end{table*}

As shown in Table~\ref{tab:paired_statistics}, the paired results are consistent with the aggregate comparison. On FFHQ, our method improves PSNR, SSIM, and LPIPS over both DPS-MAP and DMPS for all three blur operators, with all corresponding confidence intervals excluding zero. On ImageNet Gaussian deblurring, our method improves PSNR and SSIM over both baselines, while LPIPS shows a fidelity--perception trade-off. For ImageNet motion deblurring, DPS-MAP remains stronger across all three metrics, whereas our method improves PSNR and SSIM over DMPS but yields a slightly higher LPIPS. For ImageNet uniform deblurring, the PSNR difference relative to DPS-MAP is small and its confidence interval includes zero, while SSIM and LPIPS favor our method. Relative to DMPS, our method improves all three metrics.

\section{Sensitivity}
\label{app:sensitivity}

We study the sensitivity of the proposed guidance to the attenuation parameter $\rho$ and the spectral-prior strength on the fixed FFHQ motion-deblurring Dev50 set. In each experiment, only the parameter under study is varied, while all remaining settings are kept fixed. %

\subsection{Sensitivity to $\rho$}

We vary the attenuation parameter over
$\rho\in\{10^{-4},3\times10^{-4},10^{-3},3\times10^{-3},10^{-2}\}$,
with $\rho=10^{-3}$ as the default value.

\begin{figure}[t]
    \centering
    \includegraphics[width=0.9\textwidth]
    {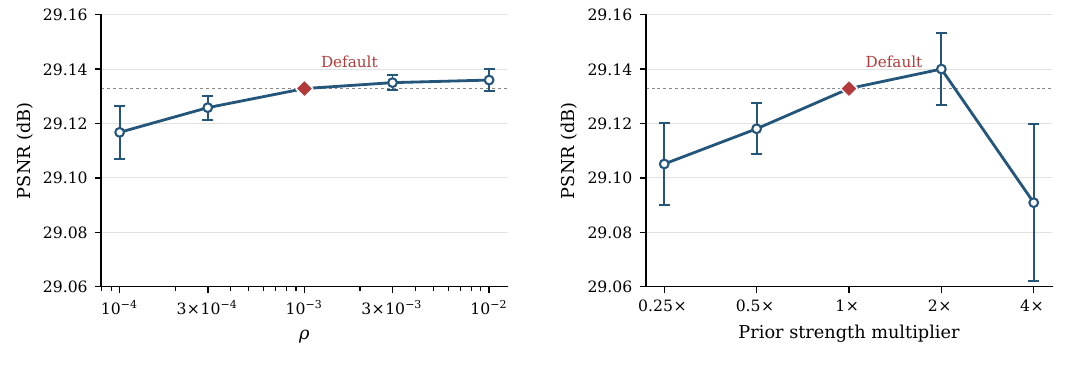}
    \caption{
    Hyperparameter sensitivity on the FFHQ motion-deblurring Dev50 set.
    Left: sensitivity to the attenuation parameter $\rho$, with
    $\rho=10^{-3}$ as the default.
    Right: sensitivity to the spectral-prior strength, shown as a multiplier
    of the default $\lambda_{\mathrm{prior}}=0.005$.
    In both panels, all other parameters are fixed to the default
    configuration. Error bars denote paired Student-$t$ 95\% confidence
    intervals of the PSNR difference relative to the default setting,
    shifted to the corresponding absolute PSNR levels.
    }
    \label{fig:sensitivity}
\end{figure}

As shown in the left panel of Fig.~\ref{fig:sensitivity}, the reconstruction quality remains stable around the default value. Reducing $\rho$ below $10^{-3}$ gradually decreases PSNR. Increasing $\rho$ to $3\times10^{-3}$ or $10^{-2}$ changes the mean PSNR by only $+0.002$ and $+0.003$ dB, respectively, and both paired confidence intervals include zero. SSIM changes only slightly over these larger values, while LPIPS shows a small improvement. Overall, the results indicate limited sensitivity to $\rho$ around the default configuration.

\subsection{Sensitivity to the Prior Strength}

We next vary the spectral-prior coefficient using multipliers
$\{0.25,0.5,1,2,4\}$ relative to the default
$\lambda_{\mathrm{prior}}=0.005$.

The right panel of Fig.~\ref{fig:sensitivity} shows that weakening the prior to $0.25\times$ or $0.5\times$ reduces PSNR. Increasing the prior to $2\times$ gives a small mean PSNR increase of $0.007$ dB and improves LPIPS, while the paired PSNR confidence interval still includes zero and SSIM decreases slightly. At $4\times$, LPIPS further decreases from $0.1322$ to $0.1279$, whereas PSNR decreases by $0.042$ dB and SSIM by $0.0020$, showing a clearer fidelity--perception trade-off.
Together with the $\rho$ experiment, these results show that the
default configuration lies in a relatively stable operating region.

\section{Additional Qualitative Results}
\label{app:qualitative}

We provide additional qualitative comparisons on FFHQ and ImageNet
for Gaussian, motion, and uniform deblurring. All results are taken
from the frozen Full1000 reconstructions at
$\sigma_{\mathrm{y}}=0.05$. In addition to the full-image comparisons,
we provide enlarged crops, reconstruction-error visualizations, and
frequency-domain visualizations of the proposed guidance mechanism.

\subsection{Additional FFHQ Reconstructions}

Figure~\ref{fig:supp_ffhq_combined} shows additional FFHQ reconstruction results for Gaussian, motion, and uniform deblurring. The examples include a range of facial structures, hair regions, glasses, and other fine details that are sensitive to blur and reconstruction artifacts.

\begin{figure*}[t]
    \centering
    \includegraphics[width=\textwidth]
    {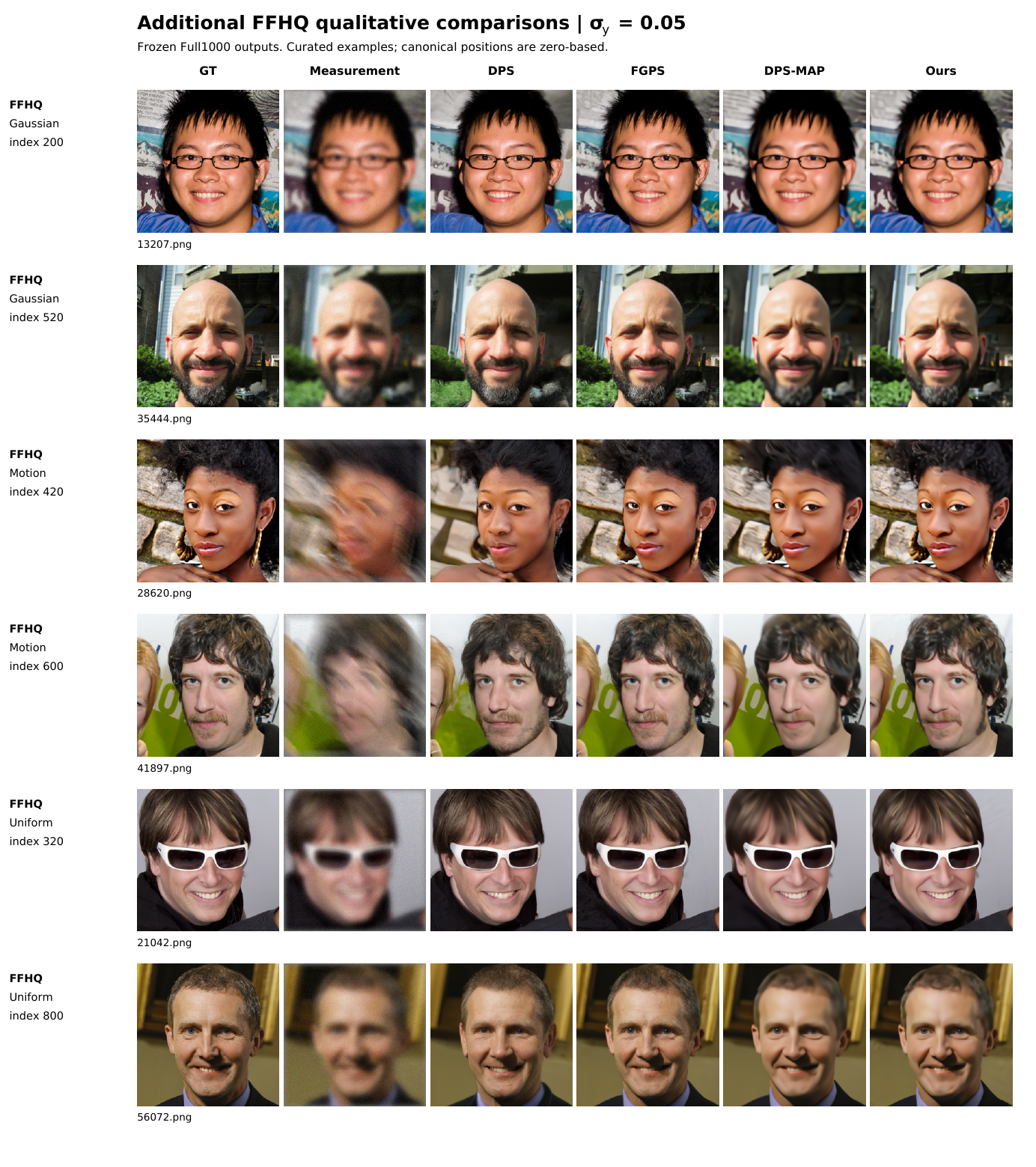}
    \caption{
    Additional qualitative comparisons on FFHQ under Gaussian, motion,
    and uniform deblurring at $\sigma_{\mathrm{y}}=0.05$.
    Two additional Full1000 examples are shown for each degradation setting.
    Columns show the ground truth, measurement, DPS, FGPS, DPS-MAP,
    and our method.
    }
    \label{fig:supp_ffhq_combined}
\end{figure*}

Across the three degradation settings, the methods differ most clearly
around fine facial structures and high-frequency boundaries.
The additional examples complement the quantitative results by showing
these differences over a broader set of FFHQ images.

\subsection{Additional ImageNet Reconstructions}

Figure~\ref{fig:supp_imagenet_combined} shows additional ImageNet reconstruction results for Gaussian, motion, and uniform deblurring. Compared with FFHQ, these examples contain more diverse object structures, textures, and high-frequency boundaries, providing complementary qualitative comparisons across different image contents.

\begin{figure*}[t]
    \centering
    \includegraphics[width=\textwidth]
    {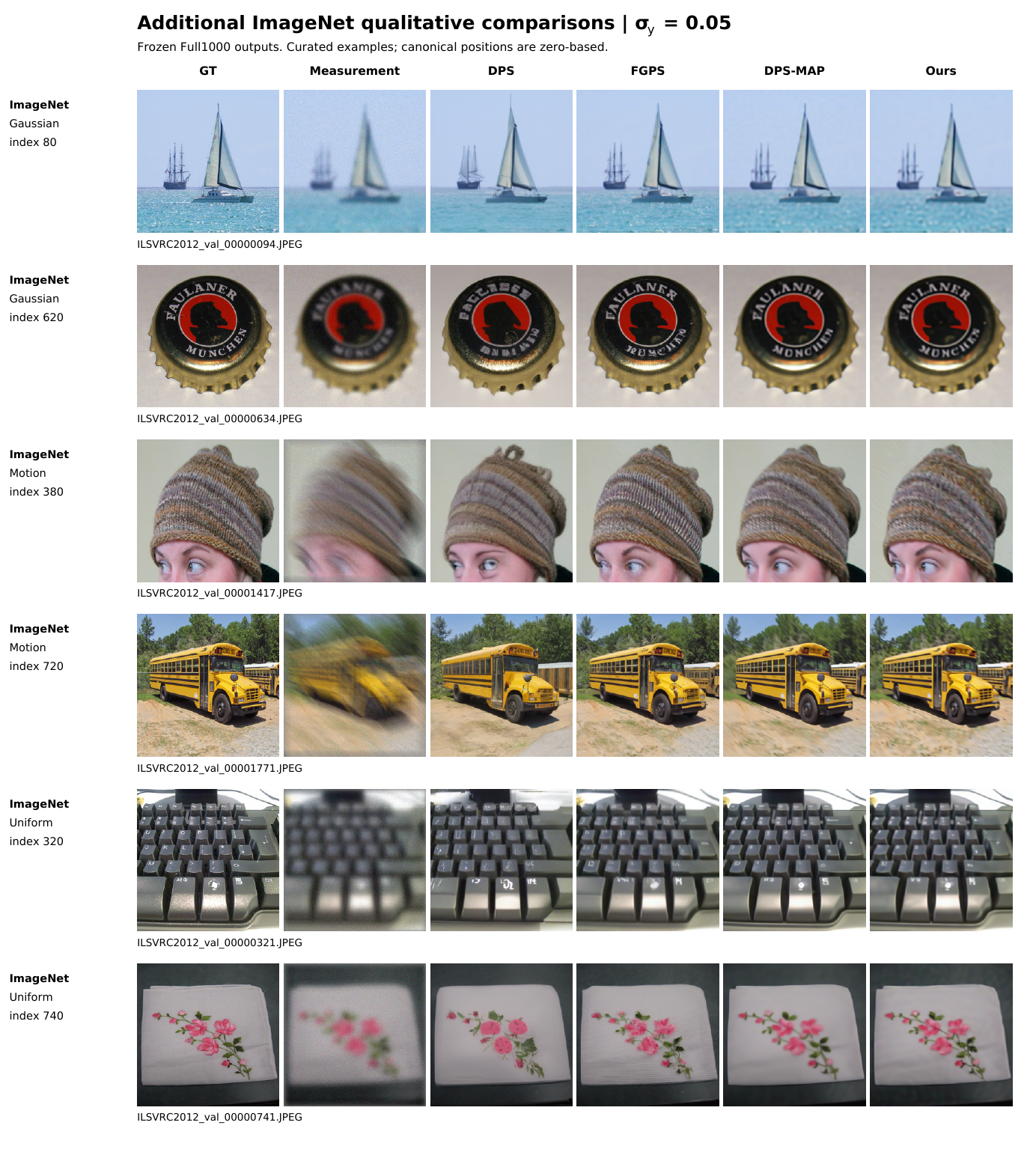}
    \caption{
    Additional qualitative comparisons on ImageNet under Gaussian, motion,
    and uniform deblurring at $\sigma_{\mathrm{y}}=0.05$.
    Two additional Full1000 examples are shown for each degradation setting.
    Columns show the ground truth, measurement, DPS, FGPS, DPS-MAP,
    and our method.
    }
    \label{fig:supp_imagenet_combined}
\end{figure*}

The ImageNet examples further illustrate the differences among the methods on thin structures, repetitive textures, and object boundaries. Together with the FFHQ results, they provide qualitative comparisons across both face images and more diverse natural-image content.

\subsection{Zoomed Comparisons}

To examine local reconstruction details more closely,
Figs.~\ref{fig:supp_zoom_ffhq} and~\ref{fig:supp_zoom_imagenet}
show enlarged crops from the FFHQ and ImageNet results, respectively.
For each example, the same crop coordinates are used for all methods.

\begin{figure*}[t]
    \centering
    \includegraphics[width=\textwidth]
    {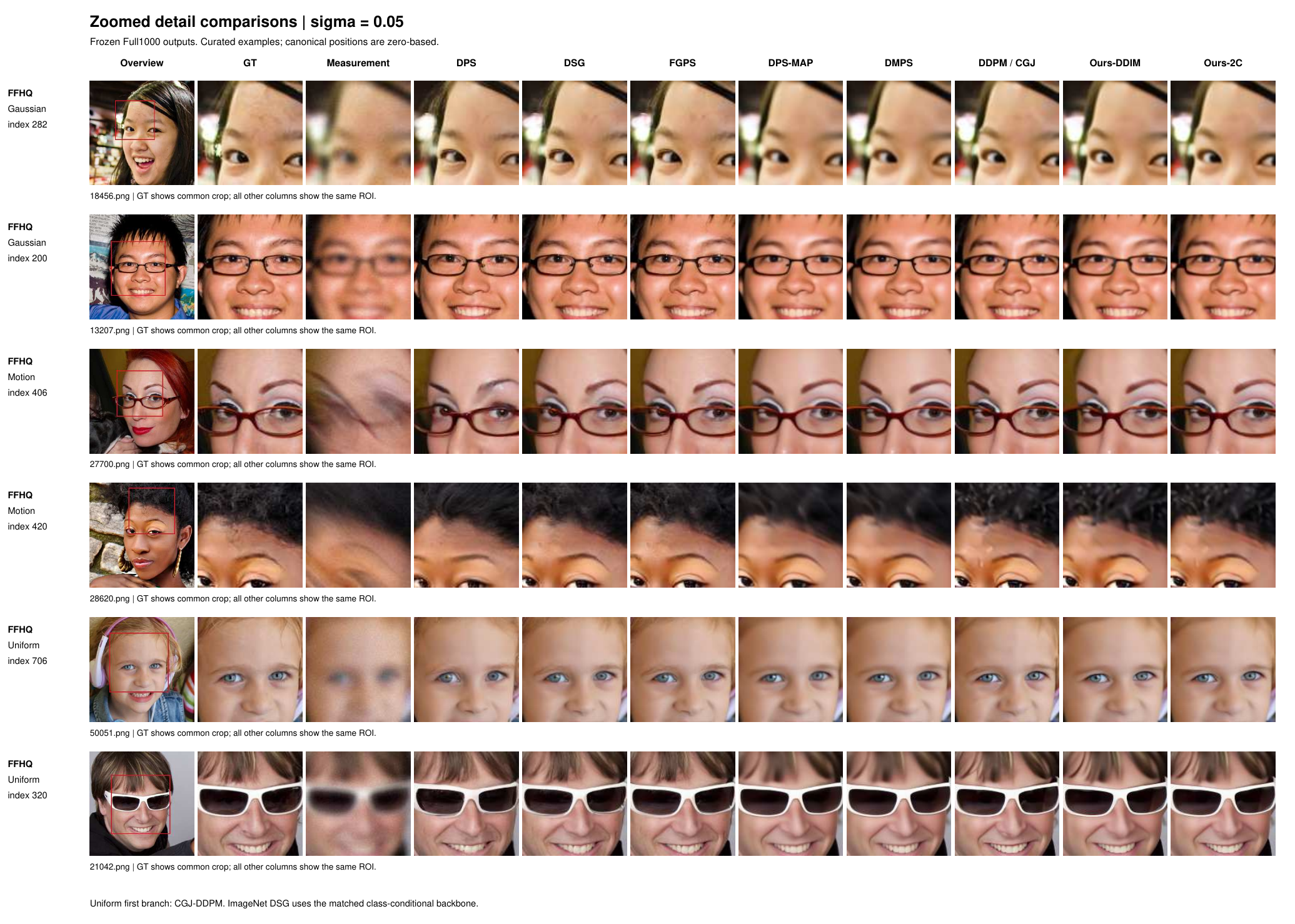}
    \caption{
    Zoomed comparisons on FFHQ at $\sigma_{\mathrm{y}}=0.05$.
    The examples cover Gaussian, motion, and uniform deblurring.
    For each row, all methods are shown using the same spatial crop.
    The uniform-deblurring comparison uses CGJ-DDPM as the first Ours
    branch.
    }
    \label{fig:supp_zoom_ffhq}
\end{figure*}

\begin{figure*}[t]
    \centering
    \includegraphics[width=\textwidth]
    {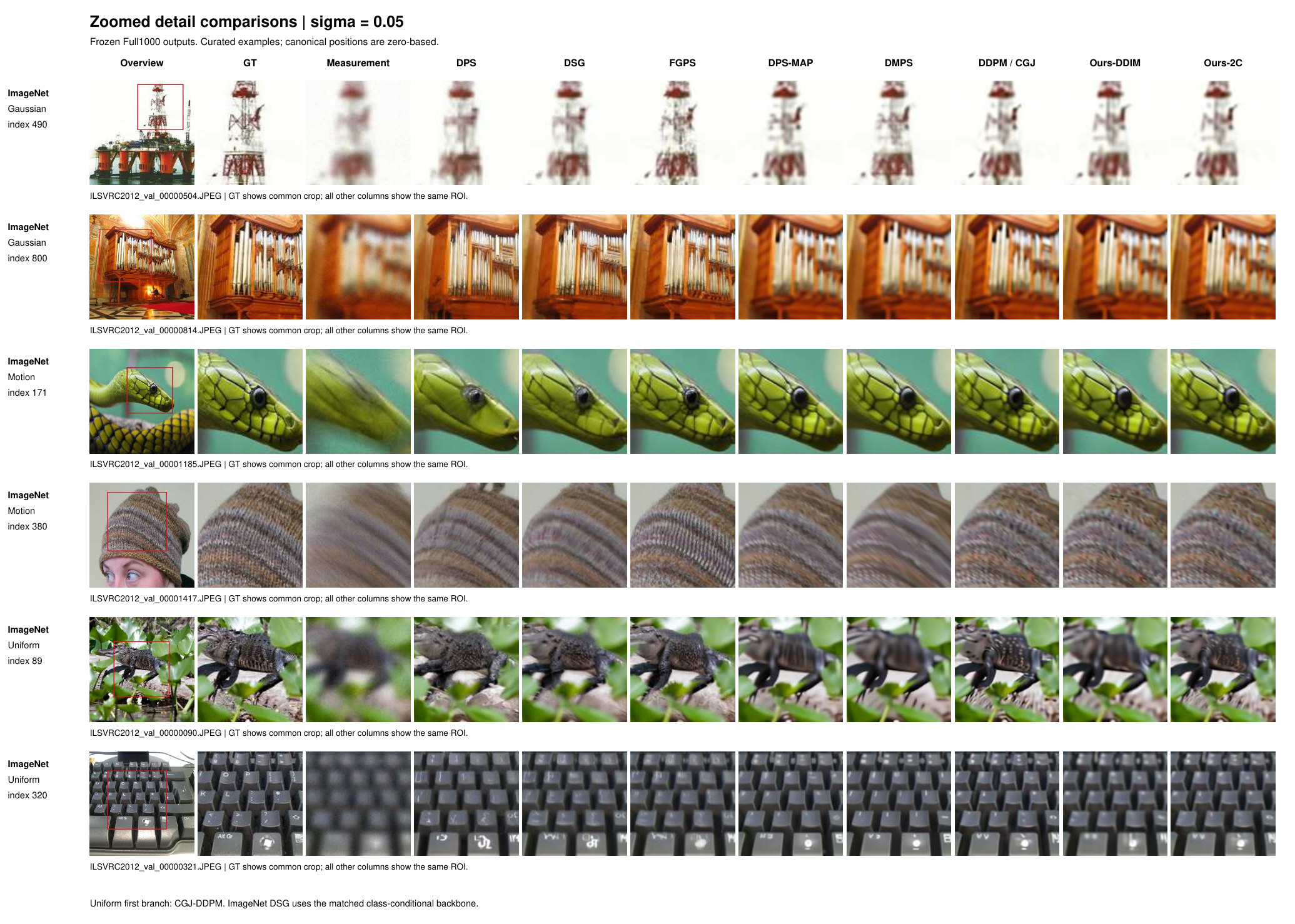}
    \caption{
    Zoomed comparisons on ImageNet at $\sigma_{\mathrm{y}}=0.05$.
    The examples cover Gaussian, motion, and uniform deblurring.
    For each row, all methods are shown using the same spatial crop.
    For uniform deblurring, the first Ours branch is CGJ-DDPM.
    }
    \label{fig:supp_zoom_imagenet}
\end{figure*}

The enlarged views make differences in fine edges, thin structures,
and local textures easier to inspect than in the full-resolution
comparisons alone. They also show the behavior of the two individual
sampling branches before the final output-space fusion.

\subsection{Absolute Reconstruction Error Visualization}

To further inspect the reconstruction behavior of different methods,
we provide channel-mean absolute error visualizations on representative
FFHQ and ImageNet examples. Figures~\ref{fig:supp_absolute_error_ffhq}
and~\ref{fig:supp_absolute_error_imagenet} show the corresponding error
maps. For a reconstruction $\widehat{\mathrm{x}}$, the channel-mean
absolute reconstruction error at pixel $(i,j)$ is defined as
\begin{equation}
    \mathrm{E}_{\mathrm{abs}}(i,j)
    =
    \frac{1}{3}
    \sum_{c=1}^{3}
    \left|
        \widehat{\mathrm{x}}_{i,j,c}
        -
        \mathrm{x}_{\mathrm{GT},i,j,c}
    \right|.
\end{equation}
All methods in the same example are visualized using the same display
scale so that the error magnitude can be compared directly.

\begin{figure*}[t]
    \centering
    \includegraphics[width=\textwidth]
    {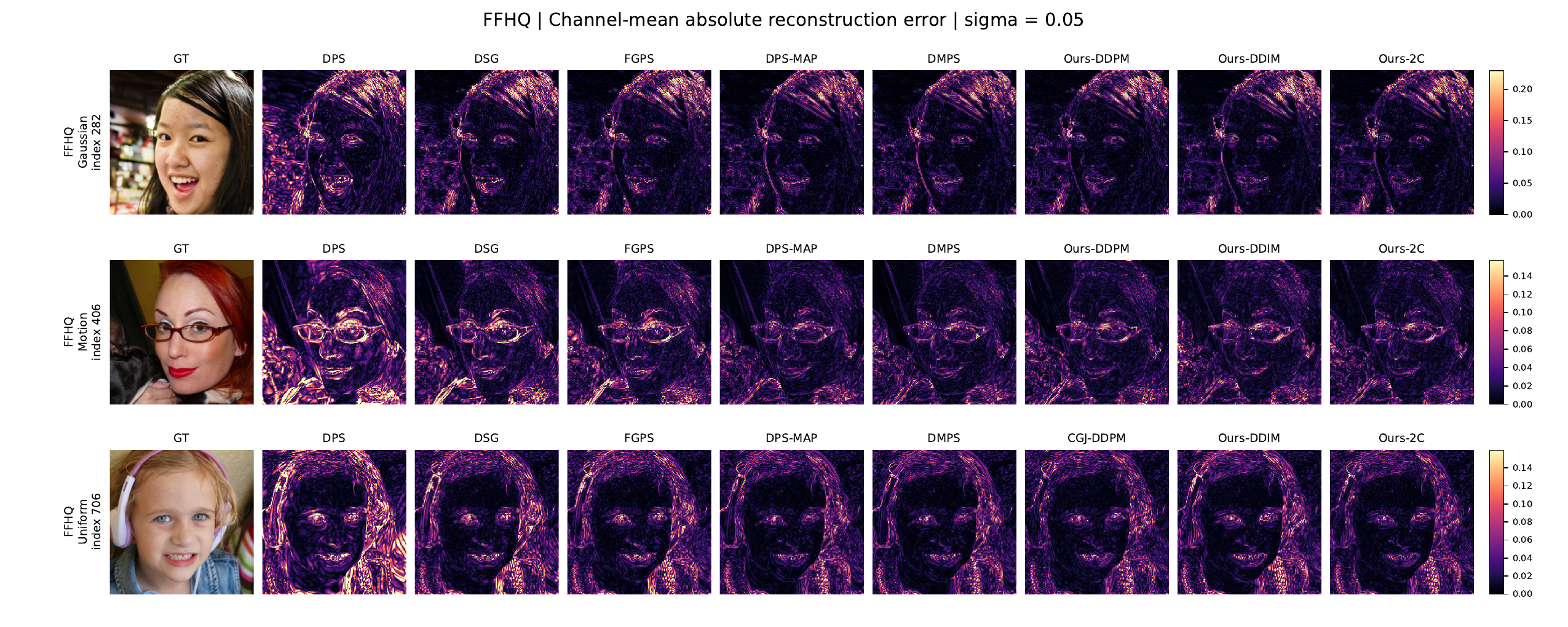}
    \caption{
    Absolute reconstruction error on representative FFHQ examples.
    Brighter regions indicate larger pixel-wise deviation from the
    ground truth.
    All methods are displayed using the same error scale within each
    example.
    }
    \label{fig:supp_absolute_error_ffhq}
\end{figure*}

\begin{figure*}[t]
    \centering
    \includegraphics[width=\textwidth]
    {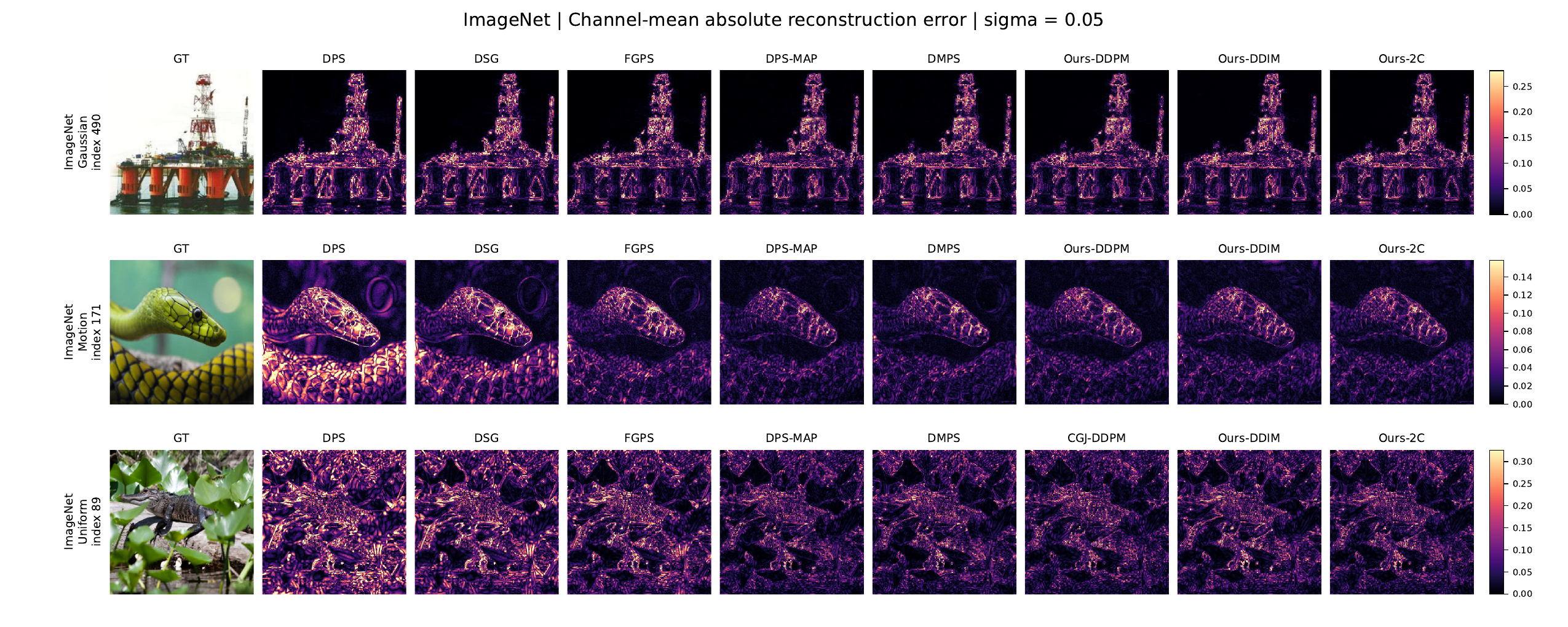}
    \caption{
    Absolute reconstruction error on representative ImageNet examples.
    Brighter regions indicate larger pixel-wise deviation from the
    ground truth.
    All methods are displayed using the same error scale within each
    example.
    }
    \label{fig:supp_absolute_error_imagenet}
\end{figure*}

The error maps provide a complementary view to the qualitative
comparisons in the previous subsection. The absolute error maps highlight where reconstruction discrepancies are concentrated. In both FFHQ and ImageNet examples, the final Ours output typically exhibits more spatially localized errors than the single-branch reconstruction, especially around salient edges and fine structures. This behavior is consistent with the quantitative improvements of the two-branch reconstruction reported in Table~\ref{tab:two_track_analysis}. At the same time, the visualizations also reflect the different fidelity--perception trade-offs of the compared baselines, which is consistent with the metric trends in the main paper.

\subsection{Visualization of Frequency-Decoupled Guidance}

We finally visualize the two frequency-domain components that define the proposed guidance: the kernel-dependent attenuation weight $W_{\kappa}$ and the time-dependent masks $G_t$ and $M_t$.
These visualizations are computed directly from the frozen kernels and sampling configurations used in the experiments.

\begin{figure*}[t]
    \centering
    \includegraphics[width=0.78\textwidth]
    {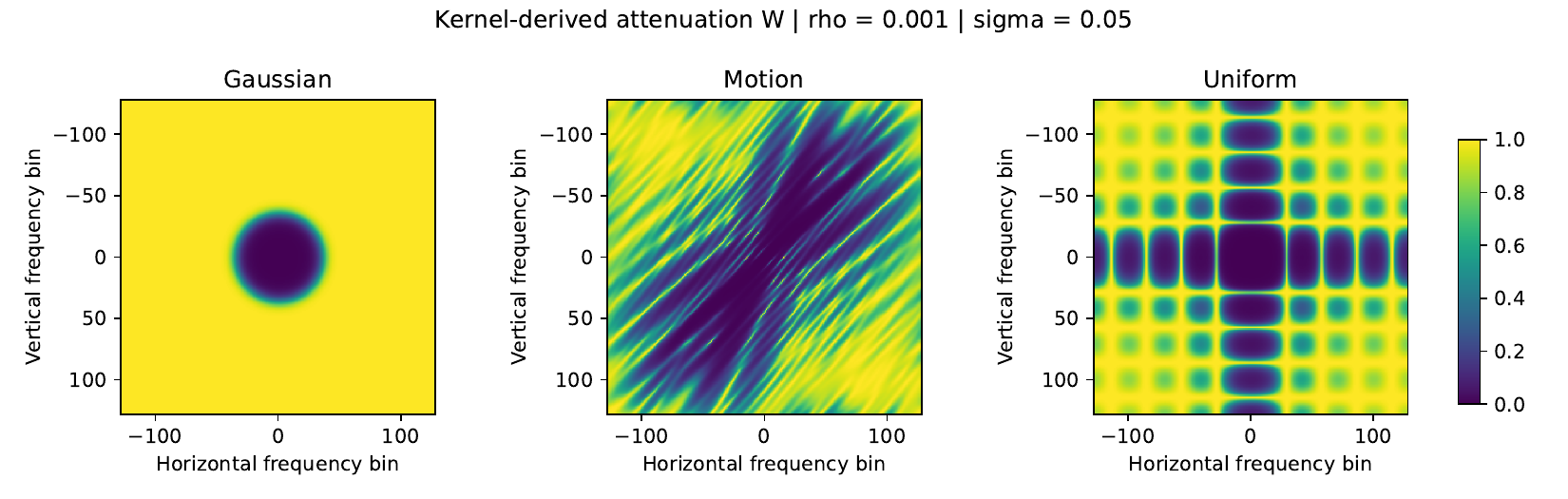}
    \caption{
    Kernel-dependent spectral attenuation weights $W_{\kappa}$ for
    Gaussian, motion, and uniform blur with $\rho=10^{-3}$.
    Larger values correspond to frequencies that are more strongly
    attenuated by the blur kernel.
    The Gaussian kernel produces a radially structured attenuation
    pattern, the motion kernel produces a directional pattern, and the uniform kernel produces a periodic spectral structure.
    }
    \label{fig:supp_spectral_attenuation}
\end{figure*}

Figure~\ref{fig:supp_spectral_attenuation} illustrates how the
attenuation weight depends on the blur operator. Since
$W_{\kappa}(\omega)$ increases as
$|\widehat{\kappa}(\omega)|$ decreases, strongly attenuated frequencies receive larger weights in the spectral prior. For uniform deblurring, $W_{\kappa}$ is constructed from the Fourier response of the underlying $9\times9$ blur kernel, while measurement consistency is evaluated using the implemented truncated-SVD operator.

The active measurement mask $G_t$ evolves according to the progressive frequency schedule, while
\begin{equation}
    M_t(\omega)
    =
    \bigl(1-G_t(\omega)\bigr)W_{\kappa}(\omega)
\end{equation}
restricts the attenuation weighting to frequencies outside the current active band. Figures~\ref{fig:supp_schedule_gaussian}--%
\ref{fig:supp_schedule_uniform} visualize this evolution at
$p_t\in\{0,0.25,0.5,0.75,1\}$ for the frozen branch-specific
configurations.

\begin{figure*}[t]
    \centering
    \includegraphics[width=\textwidth]
    {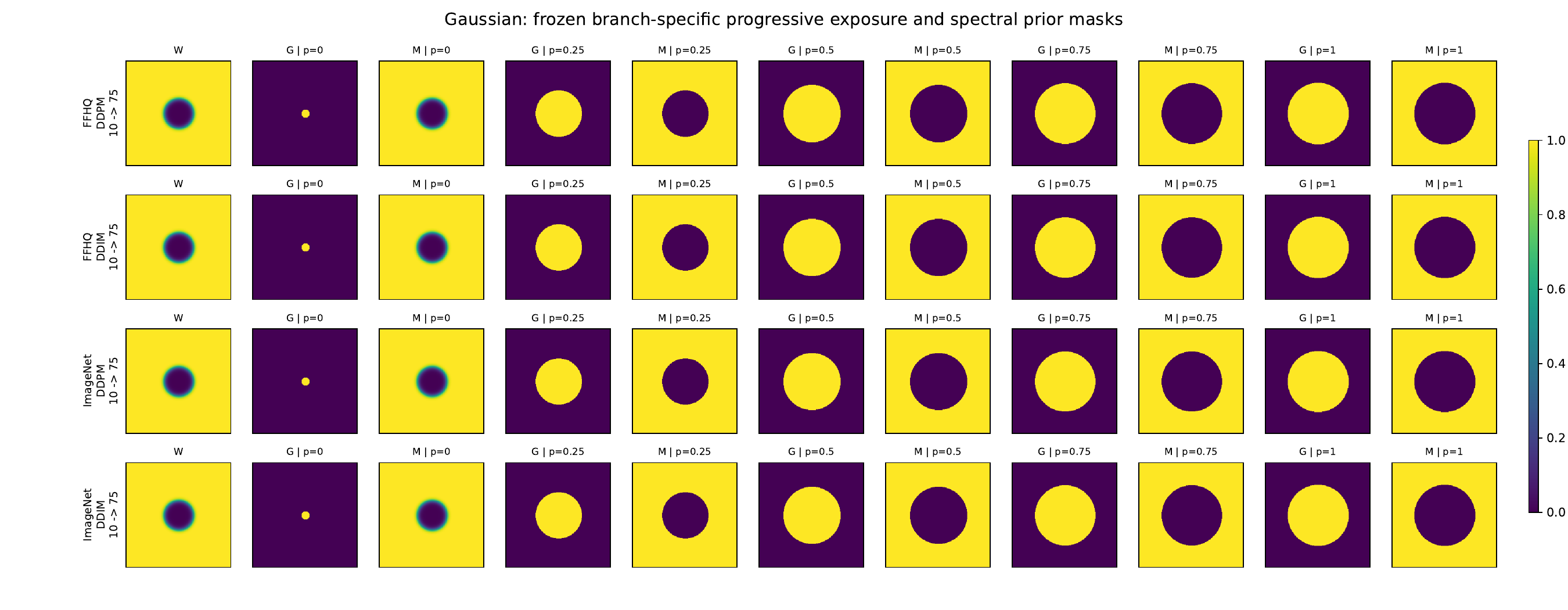}
    \caption{
    Evolution of $G_t$ and $M_t$ during reverse sampling for Gaussian
    deblurring. As the reverse process progresses, the support of
    $G_t$ expands from low to higher frequencies, while the support of
    $M_t=(1-G_t)W_{\kappa}$ correspondingly contracts.
    The rows show the frozen branch-specific schedules used for FFHQ
    and ImageNet.
    }
    \label{fig:supp_schedule_gaussian}
\end{figure*}

\begin{figure*}[t]
    \centering
    \includegraphics[width=\textwidth]
    {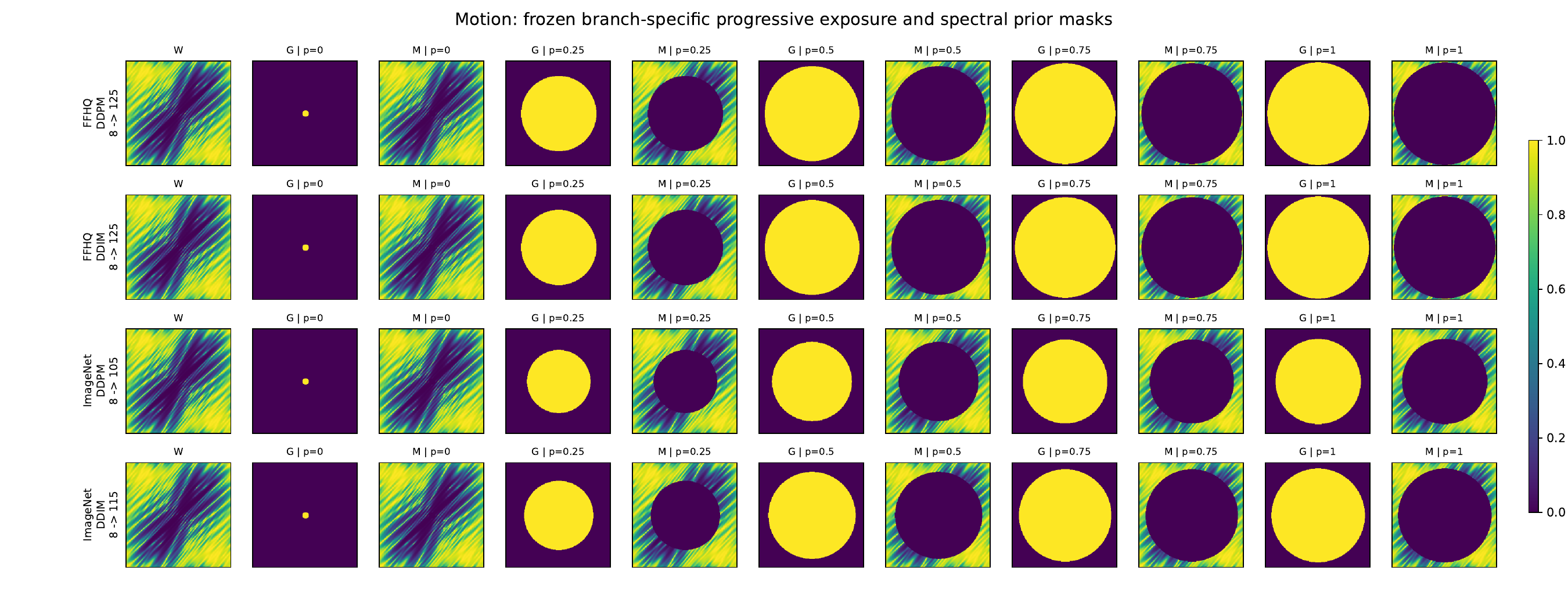}
    \caption{
    Evolution of $G_t$ and $M_t$ during reverse sampling for motion
    deblurring. The directional structure inherited from
    $W_{\kappa}$ remains visible in the inactive-band weighting
    $M_t$, while the active measurement region progressively expands. The rows correspond to the frozen branch-specific schedules used for FFHQ and ImageNet.
    }
    \label{fig:supp_schedule_motion}
\end{figure*}

\begin{figure*}[t]
    \centering
    \includegraphics[width=\textwidth]
    {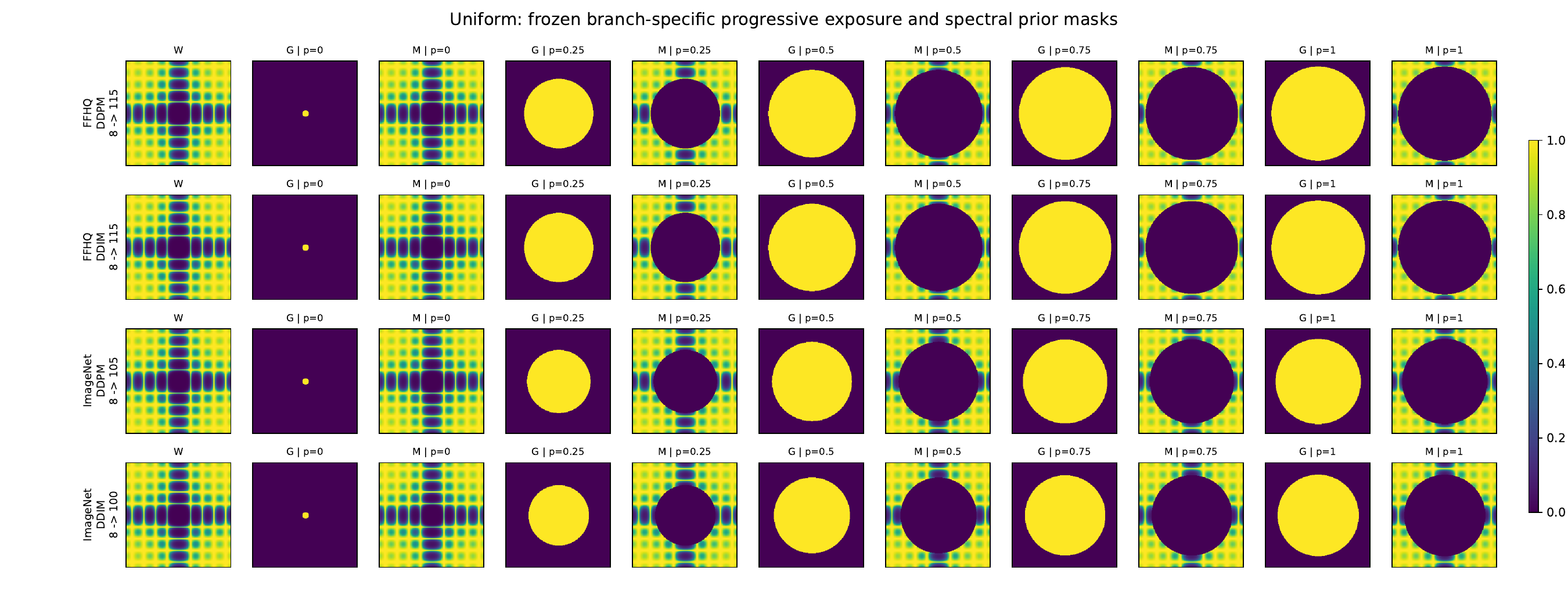}
    \caption{
    Evolution of $G_t$ and $M_t$ during reverse sampling for uniform deblurring. The periodic attenuation structure of the underlying uniform kernel modulates the inactive-frequency prior through $M_t=(1-G_t)W_{\kappa}$.
    For the uniform task, the first Ours branch is CGJ-DDPM and the
    second branch is DDIM.
    }
    \label{fig:supp_schedule_uniform}
\end{figure*}

Across the three operators, the visualizations show the two roles of the frequency-decoupled construction. The mask $G_t$ determines which frequencies currently participate in measurement consistency, whereas $W_{\kappa}$ determines how strongly the remaining inactive frequencies are weighted by the spectral prior. As the active band expands, frequencies progressively leave $M_t$ and enter the measurement-consistency term.

\section{Detailed Computational Cost}
\label{app:compute}
\paragraph{Computing infrastructure.}
Experiments were conducted using NVIDIA RTX 3090 GPUs on Linux servers.

\subsection{UNet and Operator Evaluations}

We report the inference cost in terms of the number of diffusion-model
UNet evaluations (NFE) per reconstructed image. NFE counts logical
UNet forward evaluations; gradient back-propagation through the
guidance objective is not counted as an additional NFE. Iterative
forward/adjoint operator evaluations used by conjugate-gradient
corrections are reported separately. The corresponding diffusion-model evaluation budgets are summarized in Table~\ref{tab:detailed_compute_cost}.

\begin{table*}[t]
\centering
\caption{
Number of diffusion-model UNet evaluations per reconstructed image.
The final Ours result combines two independently completed sampling
branches. For uniform deblurring, the CGJ-DDPM branch requires
additional Tweedie predictions during the operator-specific correction.
}
\label{tab:detailed_compute_cost}

\small
\setlength{\tabcolsep}{4.5pt}
\renewcommand{\arraystretch}{1.08}

\resizebox{\textwidth}{!}{%
\begin{tabular}{lcccccc}
\toprule
Method
& FFHQ-G
& FFHQ-M
& FFHQ-U
& ImageNet-G
& ImageNet-M
& ImageNet-U \\
\midrule

DPS
& 1000 & 1000 & 1000
& 1000 & 1000 & 1000 \\

DSG
& 1000 & 1000 & 1000
& 1000 & 1000 & 1000 \\

FGPS
& 1000 & 1000 & 1000
& 1000 & 1000 & 1000 \\

DPS-MAP
& 1000 & 1000 & 1000
& 100 & 100 & 100 \\

DMPS
& 100 & 100 & 100
& 100 & 1000 & 1000 \\

Ours
& 2000 & 2000 & 2150
& 2000 & 2000 & 2150 \\

\bottomrule
\end{tabular}
}
\end{table*}

For Gaussian and motion deblurring, each branch of our method uses
1000 reverse steps. The final Ours reconstruction therefore
requires 2000 UNet evaluations per image. A single non-CGJ branch uses
1000 UNet evaluations, matching the diffusion-model budget of DPS,
DSG, and FGPS.

For uniform deblurring, the CGJ correction is activated only in the
CGJ-DDPM branch. Starting from $p_t=0.25$, the correction is applied
every five reverse steps, resulting in 150 CGJ activations per image.
Each activation computes a fresh Tweedie prediction, adding 150 UNet
evaluations to the two 1000-step trajectories. The resulting total is
therefore 2150 UNet evaluations per image.

The CGJ correction also performs five conjugate-gradient iterations
per activation, corresponding to 750 CG iterations per reconstructed
image. These iterations involve applications of the implemented
forward and adjoint operators and are not counted as diffusion-model
NFE. DPS-MAP likewise uses a conjugate-gradient measurement correction,
with 20 CG iterations in its measurement-space solve.

The baseline sampling budgets are task dependent. DPS, DSG, and FGPS
use 1000 UNet evaluations in all reported settings. DPS-MAP uses 1000
respaced reverse steps on FFHQ and 100 steps on ImageNet. DMPS uses
100 steps on FFHQ and ImageNet Gaussian deblurring, and 1000 steps on
ImageNet motion and uniform deblurring.

\end{document}